\documentclass{article}

\usepackage{PRIMEarxiv}

\usepackage[utf8]{inputenc} 
\usepackage[T1]{fontenc}    
\usepackage{hyperref}       
\usepackage{url}            
\usepackage{booktabs}       
\usepackage{amsfonts}       
\usepackage{nicefrac}       
\usepackage{microtype}      
\usepackage{lipsum}
\usepackage{fancyhdr}       
\usepackage{graphicx}       
\graphicspath{{media/}}     

\usepackage{nicefrac}       
\usepackage{microtype}      
\usepackage{xcolor}         
\usepackage{mathtools}
\usepackage{subfig}
\usepackage{graphicx}

\usepackage{algorithm}
\usepackage{algpseudocode}
\usepackage{svg}
\usepackage{paralist} 

\usepackage{comment}
\usepackage{ulem}
\usepackage{tikz}

\usetikzlibrary{arrows.meta,
                calc, chains,
                ext.paths.ortho,      
                positioning,
                quotes}



\newif\ifcomments \commentsfalse

\usepackage{soul}

\newcommand{\sn}[1]{
\ifcomments
{\textcolor{blue}{[{ {\bf SVN:} { {#1}}}]}}
\fi
}

\newcommand{\luk}[1]{
\ifcomments
{\textcolor{purple}{[{ {\bf LO:} { {#1}}}]}}
\fi
}


\usepackage{array,multirow}
\usepackage{makecell}
\usepackage[flushleft]{threeparttable}

\usepackage{wrapfig}
\usepackage{colortbl}

\usepackage{duckuments}
\usepackage{amsthm}
\usepackage{amsmath}
\newtheorem{theorem}{Theorem}[section]
\newtheorem{lemma}[theorem]{Lemma}

\usepackage{hyperref}
\usepackage{url}

\newcommand{\dist}{D_\theta}
\newcommand{\xtrain}{X_{T}}
\newcommand{\gandist}{\mathcal{G}}
\newcommand{\mixgandist}{\mathcal{M}}
\newcommand{\midist}{\mathcal{R}}
\newcommand{\fpr}{\text{FPR}}
\newcommand{\tpr}{\text{TPR}}

\newcommand{\cifar}{\text{CIFAR-10}}
\newcommand{\detector}{\text{Detector}}

\newcommand{\gan}{G}

\newcommand{\refdistro}{\mathcal{P}}
\newcommand{\discriminator}{D'}

\newcommand{\datadistro}{\mathcal{P}}

\DeclareMathOperator*{\argmin}{argmin}

\pagestyle{fancy}
\thispagestyle{empty}
\rhead{ \textit{ }} 


\title{Black-Box Training Data Identification in GANs via Detector Networks}


\author{
  Lukman Olagoke \\
  Harvard University \\
  \texttt{lolabisiolagoke@hbs.edu} \\
   \And
  Salil Vadhan \\
  Harvard University \\
  \texttt{svadhan@g.harvard.edu} \\
   \And
  Seth Neel \\
  Harvard University \\
  \texttt{sneel@hbs.edu} \\
}

\begin{document}
\maketitle

\begin{abstract}
Since their inception Generative Adversarial Networks (GANs) have been popular generative models across images, audio, video, and tabular data. In this paper we study whether given access to a trained GAN, as well as fresh samples from the underlying distribution, if it is possible for an attacker to efficiently identify if a given point is a member of the GAN's training data. This is of interest for both reasons related to copyright, where a user may want to determine if their copyrighted data has been used to train a GAN, and in the study of data privacy, where the ability to detect training set membership is known as a membership inference attack. Unlike the majority of prior work this paper investigates the privacy implications of using GANs in black-box settings, where the attack only has access to samples from the generator, rather than access to the discriminator as well. We introduce a suite of membership inference attacks against GANs in the black-box setting and evaluate our attacks on image GANs trained on the CIFAR10 dataset and tabular GANs trained on genomic data. Our most successful attack, called The $\detector$, involve training a second network to score samples based on their likelihood of being generated by the GAN, as opposed to a fresh sample from the distribution. We prove under a simple model of the generator that the detector is an approximately optimal membership inference attack. Across a wide range of tabular and image datasets, attacks, and GAN architectures, we find that adversaries can orchestrate non-trivial privacy attacks when provided with access to samples from the generator. At the same time, the attack success achievable against GANs still appears to be lower compared to other generative and discriminative models; this leaves the intriguing open question of whether GANs are in fact more private, or if it is a matter of developing stronger attacks. 
\end{abstract}
\clearpage

\section{Introduction}
Beginning in 2014 with the seminal paper ~\cite{goodfellow14}, until the recent emergence of diffusion models ~\cite{diffusion1, diffusion2, diff_audio, diff_video}, generative adversarial networks (GANs) have been the dominant generative model across image, audio, video, and tabular data generation ~\cite{sotaImage, sotaAudio,vid-image,sotavidvid,tabdataModelling,effectivetabdata}, and are still in some cases competitive with diffusion models, for example in~\cite{gigagan,Karras2021,vqgan,gansformer}. One promising use case of generative models is when the underlying training data is private, and so there are restrictions on sharing the data with interested parties. Rather than sharing the data directly, after training a generative model $G$ on the private data, either $G$ can be shared directly, or a new synthetic dataset can be generated from $G$, which can be used for downstream tasks without sharing the underlying data. While this seems private on the surface, a line of work has shown that sharing access to a generative model or synthetic dataset may leak private information from the underlying training data ~\cite{chen_gan-leaks_2020,LOGAN2}. Recent work  ~\cite{extract_diff} has shown that as compared to GANs, diffusion models are far less \emph{private}, memorizing as much as  $5 \times$ more of their training data, and as a result are highly susceptible to attacks that can reconstruct the underlying training data. These considerations make GANs a more natural choice of generative model for highly sensitive  data -- or do they?

Most existing work probing the privacy properties of GANs has typically focused on the "white-box" setting, where an adversary has access to the discriminator of the GAN, and is able to use standard techniques to conduct membership inference attacks based on the loss ~\cite{chen_gan-leaks_2020}; a notable exception is~\cite{recent}. While these white-box attacks demonstrate releasing the discriminator in particular leaks information about the underlying training set, they are unsatisfying from a practical perspective, since typically only the generator would be shared after training. This raises a natural question about the privacy of GANs: \\
\\
\begin{centering}
    \emph{In the black-box setting where an adversary only has sample access to the generator, are effective privacy attacks possible}?
\end{centering} \\
\\

Another potential motivation for developing efficient methods to determine whether a point was used to train a GAN comes from copyright. Suppose the creator of an image for example, suspects that their image was used in the training set of a generative model without their permission. Since the vast majority of state-of-the-art generative models are only made available via access to the generator, applying a black-box membership inference attack is the only way the creator can argue it is (statistically) likely their image was part of the training set. We address this gap by developing a battery of membership inference attacks against GANs that only require access to generated samples. We evaluate our attacks on target GANs trained on image and tabular data, focusing on image GANs trained on the popular CIFAR10 dataset ~\cite{cifar}, and on tabular GANs trained on SNP (Single Nucleotide Polymorphisms ~\cite{snp}) data from two popular genomic databases ~\cite{dbgap, 10002015global}. We selected genomic data because of the obvious underlying privacy concerns, and because of the high-dimensionality of the data, which makes the setting more challenging from a privacy perspective. Early privacy attacks on genomic databases \cite{homer} led the NIH to take the extreme step of restricting access to raw genomic summary results computed on SNP datasets like dbGaP \cite{dbgap}, although they later reversed course following scientific outcry.

\paragraph{Contributions.} 
In this work, we develop state-of-the-art privacy attacks against GANs under the realistic assumption of black-box access, and conduct a thorough empirical investigation across $2$ GAN architectures trained on $6$ different tabular datasets created by sub-sampling $2$ genomic data repositories, and $4$ GAN architectures trained to generate images from $\cifar$. We develop a new attack, which we call the \emph{Detector} (and its extended version called the \emph{Augmented Detector or ADIS} ), that trains a second network to detect GAN-generated samples, and applies it to the task of membership inference. We also derive theoretical results (Theorem~\ref{thm:opt}) that shed light on our attack performance. We compare our $\detector$-based methods to an array of distance and likelihood-based attacks proposed in prior work \cite{chen_gan-leaks_2020, LOGAN, refdata} which we are the first to evaluate with recent best practice metrics for MIAs \cite{firstprinciple}. Generally, we find that while privacy leakage from GANs seems to be lower than reported in prior work \cite{LOGAN, extract_diff} for other types of generative models, there does still appear to be significant privacy leakage even in the black-box setting. Moreover, this privacy-leakage as measured by the success of our MIAs seems to vary based on the type of GAN trained, as well as the dimension and type of the underlying training data. To our knowledge, we are the first to apply detectors for generated samples to the task of membership inference against generative models.

\section{Related Work}
\emph{Membership Inference Attacks} or MIAs were defined by \cite{homer} for genomic data, and formulated by \cite{shokri_membership_2017} against ML models. In membership inference, an adversary uses model access as well as some outside information to determine whether a candidate point $x$ is a member of the model training set. Standard MIAs typically exploit the intuition that a point $x$ is more likely to be a training point if the loss of the model evaluated at $x$ is small \cite{yeom_privacy_2018}, or in the white-box setting where they have access to the model's gradients, if the gradients are small \cite{shokgrad}, although other approaches that rely on a notion of self-influence \cite{self-mi} and distance to the model boundary have been proposed \cite{neel, pap}. The state of the art attack against discriminative classifiers, called LiRA \cite{firstprinciple}, approximates the likelihood ratio of the loss when $x$ is in the training set versus not in the training set, training additional shadow models to sample from these distributions. 

Assuming access to the discriminator's loss, as in \cite{LOGAN}, loss-based attacks are easily ported over to GANs. However, given that the discriminator is not required to sample from the generator once training is finished, there is no practical reason to assume that the adversary has access to the discriminator. \cite{LOGAN} find that by thresholding on the discriminator loss, TPR at FPR = $.2$ on DCGAN trained on $\cifar$ is a little under $.4$ when the target GAN is trained for $100$ epochs, but goes up to almost $100\%$ after $200$ epochs, indicating severe over-fitting by the discriminator. When they employ a black-box attack, they are able to achieve TPR = $.37$ at FPR = $.2$.
We note that this is a less than $2$x improvement over baseline, even in the presence of potentially severe over-fitting during training.

\cite{Mi-montecarlo} propose a black-box attack on GANs trained on $\cifar$ that is very similar to a distance-based attack. Their statistic samples from the GAN and counts the proportion of generated samples that fall within a given distance $\epsilon$ of the candidate point. In order to set $\epsilon$, they estimate the $1\%$ or $.1\%$ quantiles of distances to the generated GAN via sampling. Note that this assumes access to a set of candidate points $x_i$ rather than a single candidate point that may or may not be a training point, although a similar idea could be implemented using reference data. While their attacks were quite effective against VAEs and on the simpler MNIST dataset, the best accuracy one of their distance-based attacks achieves on $\cifar$ is a TPR that is barely above $50\%$, at a massive FPR that is also close to $50\%$ (Figure $4$(c)) with their other distance-based method performing worse than random guessing. 
Relatedly, ~\cite{chen_gan-leaks_2020} proposed a distance-based attack scheme based on minimum distance to a test sample. In ~\cite{chen_gan-leaks_2020}\ the adversary has access to synthetic samples from the target GAN and synthetic samples from a GAN trained on reference data ($G_{ref}$).
We follow this approach for distance-based attacks, which we discuss further in Section~\ref{onedist}. The most closely related work to our detector methods is  ~\cite{refdata} who propose  a density-based model called DOMIAS (Detecting Overfitting for Membership
Inference Attacks against Synthetic Data), which infers membership by targeting local overfitting of the generative models. Rather than train a detector network to classify whether samples are generated from the target GAN or the reference data, DOMIAS performs dimension reduction in order to directly estimate both densities, and then uses the ratio of the densities as a statistic for membership inference. 

Related to our study of tabular GANs trained on genomic data, ~\cite{yelmen_creating_2021} proposed the use of GANs for the synthesis of realistic artificial genomes with the promise of none to little privacy loss. The absence of privacy loss in the proposed model was investigated by measuring the extent of overfitting using the nearest neighbor adversarial accuracy (AAT$_{TS}$) and privacy score metrics discussed in ~\cite{yale_privacy_2019}. It should be noted that while overfitting is sufficient to allow an adversary to perform MI (and hence constitute a privacy loss), it is not a necessary prerequisite for the attack to succeed, as shown in the formal analysis presented in ~\cite{yeom_privacy_2018}.

There have been a handful of recent papers that propose black-box attacks against generative models, nearly all of which rely on the heuristic that if a point is ``closer'' to generated samples that to random points from the distribution, it is more likely to be a training point \cite{Mi-montecarlo}\cite{chen_gan-leaks_2020}. The most closely related work to our detector methods is  ~\cite{refdata} who propose  a density-based model called DOMIAS, which infers membership by targeting local over-fitting of the generative models. Rather than train a detector network to classify whether samples are generated from the target GAN or the reference data, DOMIAS performs dimension reduction in order to directly estimate both densities, and then uses the ratio of the densities as a statistic for membership inference. 
We implement many of these methods in our experiments section, and summarize them in more detail in Appendix~\ref{domias_explained}.

\section{Preliminaries}
\textbf{Generative Adversarial Network(GANs).} 
Given training data drawn from a distribution, $\xtrain \sim \datadistro$, a generative model tries to approximate $\datadistro$. 
Generative adversarial networks (GANs) are examples of generative models and have gained popularity for their ability to generate realistic samples across a range of domains ~\cite{arjovsky_wasserstein_2017,yu_inclusive_2020,biggan,contragan,dcgan}.  
The basic GAN set-up consists of two players - the \emph{discriminator} ($\discriminator$) and the \emph{generator} ($\gandist$)- engaging  in a minimax game ~\cite{goodfellow_generative_2020} with training objective given as: 
$$
\mathbb{E}_{ \mathbf{x} \sim \datadistro} [\log \discriminator(\mathbf{x})] + \mathbb{E}_{\mathbf{z} \sim p_{\mathbf{z}}(\mathbf{z})} [\log (1 - \discriminator(\gan(\mathbf{z}))) $$
The generator, $\gan$, takes latent noise, $z$, as an input and generates a sample as output, while the discriminator examines data samples (i.e, $\mathbf{x},\,  \,\text{and} \, \, \gan(\mathbf{z})$) and tries to discriminate samples from the generator (i.e  $\gan(\mathbf{z})$ ) from the real data samples (i.e $\mathbf{x}$). The generator's target is to generate realistic samples that would fool the discriminator, while the discriminator tries to detect counterfeit samples from the generator~\cite{goodfellow_generative_2020}. The training objective is typically optimized by updating the generator and discriminator via simultaneous gradient ascent.

Throughout the paper we'll let $\mathcal{T}$ be the distribution that samples a random point from $\xtrain$, $\gandist$ the distribution that samples a point from $\gan$, $\mixgandist = \frac{1}{2}\gandist + \frac{1}{2}\datadistro$ be the mixture distribution of $\gandist$ and $\datadistro$, which we will need to define our detector attack in Section~\ref{sec:distinguisher}, and $\midist = \frac{1}{2}\mathcal{T} + \frac{1}{2}\datadistro$ be the mixture distribution we evaluate our MIAs on. Given a point $x$ let $\gandist(x), \datadistro(x)$ denote the respective probability densities evaluated at $x$. 

\paragraph{Black-Box Setting.}
Following convention, we assume that our black-box attacks have access to samples from $\gandist$ but not the weights of $\gan$ directly, access to details about how the GAN was trained (e.g. the architecture of course not the training samples), and fresh samples from the distribution $\refdistro$ \cite{shokri_membership_2017, blackbox1, sablay}.
\paragraph{Attack Framework.}\label{sec:frame}
We adopt an attack framework similar to that used in countless papers on membership inference attacks against machine learning models \cite{shokri_membership_2017, ye2021enhanced, firstprinciple}. The model developer 
samples a training set $\xtrain \sim \datadistro$, and trains a GAN $\gan$ on $\xtrain$, $\gan \sim \text{Train}(\xtrain)$. Now consider a membership inference attack (MIA) $\mathcal{A}$; draw $x' \sim \midist$ and send $x'$ to $\mathcal{A}$. The membership inference attacker  $\mathcal{A}$ receives the \emph{candidate point} $x'$ and then outputs a guess: $1$ if $x' \in \xtrain$, $0$ if $x' \not\in \xtrain$. Note that if $\mathcal{A}$ only receives $x'$, then it is impossible for $\mathcal{A}$ to output a guess that has accuracy $> \frac{1}{2}$, since the marginal distribution on $x'$ is always $\datadistro$. Absent any additional information, for an fixed FPR $\alpha$, the TPR achieved by $\mathcal{A}$'s guess is always $\leq \alpha$. In the black-box setting $\mathcal{A}$ also receives black-box access to draw random samples from $\gandist$ and additional samples from $\datadistro$. From the above discussion, any accuracy $\mathcal{A}$ is able to achieve that is better than a random guess must be due to the ability to infer if $x' \in \xtrain$ through access to samples from $\gandist$. In practice, rather than an MIA $\mathcal{A}$ outputting a $1$ or $0$ guess, $\mathcal{A}$ will output a real-valued score $\mathcal{A}(x')$ corresponding to how likely $x'$ is to be in $\xtrain$. Then any threshold $\tau \in \mathbb{R}$ corresponds to an attack $\textbf{1}\{\mathcal{A}(x') > \tau \}$, and varying $\tau$ traces out a receiver operating characteristic (ROC) curve of achievable FPR/TPRs \cite{firstprinciple}.
    
\paragraph{Attack Evaluation Metrics.} 
Recent work \cite{firstprinciple, liu2022} argues the most meaningful way to evaluate a membership inference attack $\mathcal{A}$ is not to look at the overall accuracy, but to focus on what TPRs are achieveable at low FPRs, as this could correspond to a more realistic privacy violation: a small subset of the training data that model can identify with high confidence. Note that this correspond to large values of $\frac{\text{TPR}}{\text{FPR}}$ at small FPRs. We adopt this convention here, reporting the achievable TPR at fixed FPRs ($.001, .005, .01, .1$) for all of our attacks (Tables~\ref{tab:summary}, \ref{tab:summary2}). As in \cite{firstprinciple} we plot all our attack ROC curves on the log-log scale in order to visualize what is happening at low FPRs. We also report the AUCs for each ROC curve in order to provide a quick high-level summary of attack success, but again we regard such average summary statistics as less important than the $\frac{\text{TPR}}{\text{FPR}}$ statistics.

\section{Attack Model Types}
The attacks we study fall into two broad categories: (i) $\detector$-based attacks rely on training a second classifier to distinguish generator samples from reference data samples, and then using that classifier's predicted probability of being generated by $\gandist$ as a proxy for training set membership (ii) Distance-based attacks compute the distance between the candidate point and samples from the generator and samples from the reference data, and predict the point is a training point if it is closer to generated samples. In Section~\ref{sec:distinguisher} we outline the basic process of training the $\detector$, and define its augmented variant ADIS. We state theoretical results that characterize the performance of an optimal detector-based attack and show that under a simplified model where the generator learns the distribution subject to some mode collapse, detector-based attacks are approximately \emph{optimal} MIAs. In Section~\ref{onedist} we define the distance-based attacks proposed in prior work that we evaluate our methods against. In our experiments we also compare against the DOMIAS attack proposed in \cite{refdata} -- we defer the details of our implementation to Appendix ~\ref{domias_explained}.

\subsection{Detector-Based Attacks}\label{sec:distinguisher}

\begin{figure}
\label{fig:dist_train}
\centering
\begin{tikzpicture}[
node distance = 10mm, 
  start chain = A going right,
   arr/.style = {-Latex, semithick},
 block/.style = {minimum height=12mm, font=\small, 
                 text width=21mm, align=center, inner sep=1pt},
every edge quotes/.style = {auto, text width=19mm, align=center, 
                            font=\footnotesize}
                        ]
\setkeys{Gin}{width=16mm}
    \begin{scope}[nodes={block, on chain=A, join = by arr}]
\node   {\includegraphics[width=1.8cm]{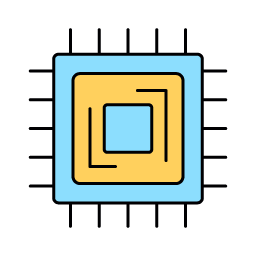}\\
         Generator\\ $G(\cdot)$};
\node   {\includegraphics[width=1.8cm]{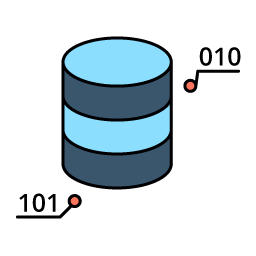}\\
        };
\node   {\includegraphics[width=1.8cm]{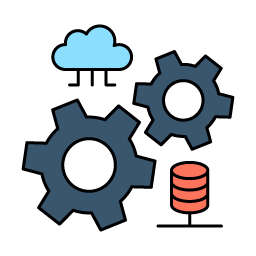}\\
         $(X_{\mathcal{D}}, 0) \cup (X_G, 1)$};
\node   {\includegraphics[width=1.8cm]{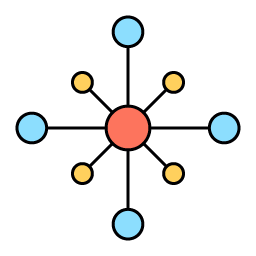}\\
         Train\\ Set};
\node   {\includegraphics[width=1.5cm]{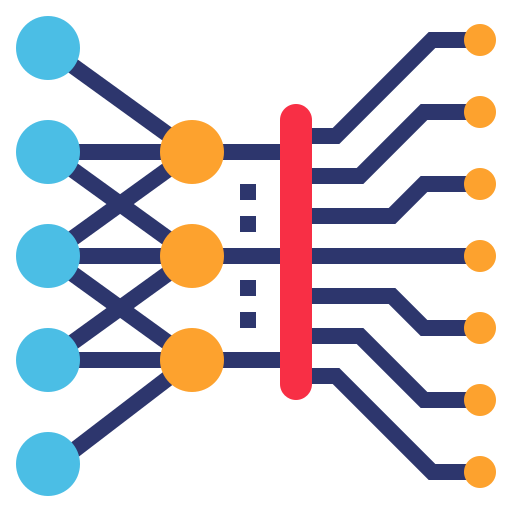}\\
         Detector \\ Training};
    \end{scope}
\node[above=of A-2, anchor=south] (ref) {\includegraphics[width=1.8cm]{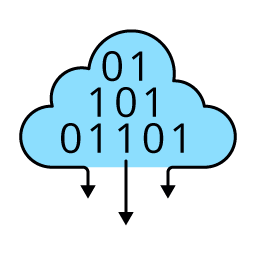}};
\node[above=of A-2, align=center, yshift=-3mm,xshift=11mm] (text) {\;\;\; \; \; Sample $X_{\mathcal{D}} \sim \mathcal{D}$};

\node [ above=of A-4]  (val) {\includegraphics[width=1.8cm]{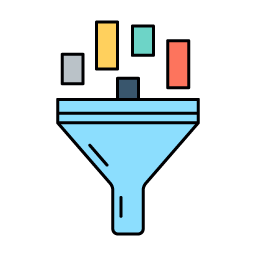}};    
\node[above=of A-4, align=center, yshift=-6mm,xshift=0mm] (text) {Validation \\ Set};
\node [right=of val]  (checker) {\includegraphics[width=1.9cm]{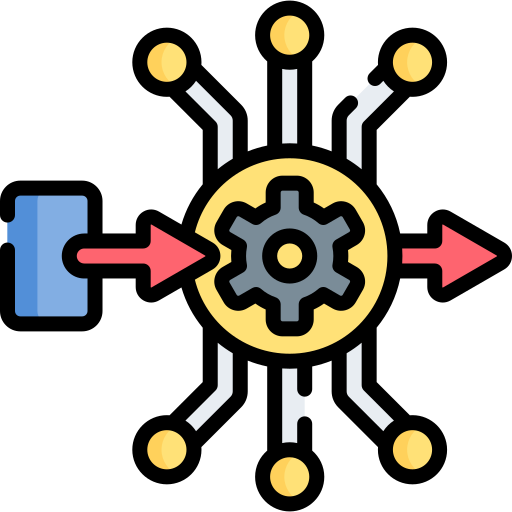}}  ;  {Check cycle};
\node[above=of A-5, align=center, yshift=-3mm,xshift=-12mm] (text) {Early \\ Stopping};
\path   (A-1) to ["Generate synthetic data $X_G$"]    (A-2)
        (A-2) to ["Label and\\ Reshuffle Data"]                 (A-3)
        (A-3) to ["Split Training Data" ']          (A-4);
\draw[arr]  (ref) edge  (A-2)
            (val) edge  (checker)
            (A-5) edge  (checker)
             (checker) edge  (A-5)
            ($(A-3)!0.5!(A-4)$) |- (val);
    \end{tikzpicture} 
    \caption{The pipeline for training the $\detector$ $\dist$.}
    \label{tikz:disting} 
    \end{figure}
    
The $\detector$ is based on the premise that \emph{a network that can distinguish samples \emph{generated} from the GAN from samples from the true distribution, can also distinguish training samples from the distribution}. Specifically a network called the $\detector$ ($\dist$) is trained to classify samples from the GAN as $1$, and samples from $\datadistro$ as $0$. After training $\dist$, given a candidate point $x$, the membership inference score for $x$ is the predicted probability under $\dist$ of the point being generated from the GAN, which we'll denote $\dist(x)$. The high-level pipeline for training $\dist$ is shown in Figure~\ref{tikz:disting}, with the algorithmic details of how to set the hyper-parameters deferred to the relevant experimental sections. The variant of the $\detector$ attack we call the Augmented $\detector$ (ADIS) follows the same process of training the $\detector$, but augments the feature space used to train the $\detector$ with statistics based on the distance from $x$ to samples from the generator and reference data, including the DOMIAS statistic \cite{refdata}. The full details of how we train ADIS are deferred to Appendix~\ref{sec:adis}.

Now let $f: \mathcal{X} \to [0, 1]$, and $\text{Lip}(f)$ denote the Lipschitz constant of $f$. We can relate the error of $f$ when used for membership inference on points sampled from $\midist$ to the error of $f$ when used to classify whether points from $\mixgandist$ were sampled from $\gandist$ or $\datadistro$. If points sampled from $\datadistro$ are labeled $0$ and points from $\gandist$ are labeled $1$, then the FPR of $f$ used to classify points from $\mixgandist$ is $\fpr_{\mixgandist} =  \frac{1}{2}\mathbb{E}_{\datadistro}[f(x)]$. Now, the FPR of $f$ as an MIA is $\fpr_{\midist} = \mathbb{E}_{\midist}[f(x) | x \in \datadistro] = \fpr_{\mixgandist}$, so the FPRs are the same. The TPR is more interesting; it is easy to show (Lemma~\ref{lem:was}) that 
\begin{equation}
\label{eq:tpr_was}
\text{TPR}_{\midist}(f) \leq \text{TPR}_{\mixgandist}(f) + \frac{\text{Lip}(f)}{2}W^{1}(\gandist, \mathcal{T}), 
\end{equation}

where $W^1(\gandist, \mathcal{T})$ is the $1$-Wasserstein distance between $\gandist$ and $\mathcal{T}$. Note that when $\gandist = \mathcal{T}$ then $\midist = \mixgandist$, and the above inequality becomes equality, with $W^1(\gandist, \mathcal{T}) = 0$. Equation~\ref{eq:tpr_was} motivates the $\detector$ attack, because although we can't directly sample from $\midist$, and therefore can't train $f$ to maximize $\tpr_{\midist}(f)$ at a fixed $\fpr$ rate, in the black-box setting we can still sample from $\mixgandist$, and so we can train $f$ that maximizes $\tpr_{\mixgandist}(f)$ subject to $\fpr_{\mixgandist}(f) = \fpr_{\midist}(f) = \alpha$ for any $\alpha \in  [0, 1]$. While Equation~\ref{eq:tpr_was} is nice in that we can upper bound the rate we'd expect $\text{TPR}_{\midist}(f) \to \text{TPR}_{\mixgandist}(f)$ as $\gandist \to \mathcal{T}$, namely the rate $W^1(\gandist, \mathcal{T}) \to 0$, it doesn't tell us much about why in practice a classifier $f$ that minimizes error on $\mixgandist$ might achieve low error on $\midist$: (i) $f$ is a neural network and so $\text{Lip}(f)$ could be very large in practice, making Equation~\ref{eq:tpr_was} vacuous and (ii) $W^1(\gandist, \mathcal{T})$ can be quite large. 

In Theorem~\ref{thm:opt} we prove the neat result that if $\mathcal{G}$ is a mixture of the training set distribution $\mathcal{T}$ and the dataset $\datadistro$, and $f^*$ is the Bayes optimal classifier with respect to $\mathcal{M}$, then the MIA that thresholds based on $f^*$ achieves maximal $\tpr$ at any fixed $\fpr$ on the distribution $\midist$. We defer the proof to the Appendix, which uses the form of $\gandist$ to decompose the error on $\gandist$ into errors on $\mathcal{T}, \datadistro$, and follows from the Neyman Pearson lemma \cite{neyman} and the assumption that $f^*$ is Bayes optimal. 

\begin{theorem}
\label{thm:opt}
Suppose that $\gandist = \beta\datadistro + (1-\beta)\mathcal{T}$ for some $\beta \in [0,1]$. Let $f^*: \mathcal{X} \to [0, 1]$ be the posterior probability a sample from $\mixgandist$ is drawn from $\gandist: f^*(x) = \frac{\gandist(x)}{\gandist(x) + \mathcal{P}(x)}$. Then for any fixed $\alpha \in [0, 1], \; \exists \tau_{\alpha}$ such that attack 
$$
F^*(x) = \textbf{1}\{f^*(x) > \tau_{\alpha}\} 
$$
satisfies $\fpr_{\midist}(F^*) = \alpha$, and \emph{for any MIA $f$} such that $\fpr_{\midist}(f) = \alpha$,  $\tpr_{\midist}(f) \leq \tpr_{\midist}(F^*)$.
\end{theorem}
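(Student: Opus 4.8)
The plan is to reduce the statement to the Neyman--Pearson lemma by showing that thresholding $f^*$ is \emph{exactly} a likelihood-ratio test for the binary hypothesis test that underlies membership inference on $\midist$. First I would make the identification explicit: a point drawn from $\midist$ carries label $1$ precisely when it came from the $\mathcal{T}$-component and label $0$ when it came from the $\datadistro$-component, so any score-based attack $f$ with threshold $\tau$, evaluated on $\midist$, is a (possibly randomized) test of $H_0: x \sim \datadistro$ against $H_1: x \sim \mathcal{T}$. Under this dictionary $\fpr_{\midist}(\mathbf{1}\{f>\tau\}) = \mathbb{E}_{\datadistro}[\mathbf{1}\{f(x)>\tau\}]$ is the Type-I error and $\tpr_{\midist}(\mathbf{1}\{f>\tau\}) = \mathbb{E}_{\mathcal{T}}[\mathbf{1}\{f(x)>\tau\}]$ is the power; the $\tfrac12,\tfrac12$ mixture weights of $\midist$ do not enter the ROC curve at all. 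Hence it suffices to exhibit, for each level $\alpha$, the most powerful level-$\alpha$ test, and Neyman--Pearson identifies this as the likelihood-ratio test rejecting $H_0$ when $r(x) := \mathcal{T}(x)/\datadistro(x)$ exceeds a threshold (with randomization on the level set if necessary).

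The core step is then a short algebraic observation using the assumed form of $\gandist$. Substituting $\gandist = \beta\datadistro + (1-\beta)\mathcal{T}$ into $f^*$ and dividing numerator and denominator by $\datadistro(x)$ gives
$$
f^*(x) \;=\; \frac{\gandist(x)}{\gandist(x)+\datadistro(x)} \;=\; \frac{\beta + (1-\beta)\,r(x)}{(1+\beta) + (1-\beta)\,r(x)},
$$
and the map $r \mapsto \big(\beta + (1-\beta)r\big)\big/\big((1+\beta)+(1-\beta)r\big)$ has derivative $(1-\beta)\big/\big((1+\beta)+(1-\beta)r\big)^2$, which is strictly positive whenever $\beta < 1$. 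Thus $f^*$ is a strictly increasing function of the MIA likelihood ratio $r$, so $\{x: f^*(x) > \tau\}$ coincides with $\{x: r(x) > \eta\}$ for the corresponding $\eta$; that is, $F^*$ \emph{is} a likelihood-ratio test. Choosing $\tau_\alpha$ to be the $(1-\alpha)$-quantile of $f^*(x)$ under $\datadistro$ makes $\fpr_{\midist}(F^*) = \alpha$, and Neyman--Pearson then yields $\tpr_{\midist}(f) \le \tpr_{\midist}(F^*)$ for every $f$ with $\fpr_{\midist}(f) = \alpha$.

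I expect the main difficulty to be bookkeeping around edge cases rather than anything deep: (i) hitting the FPR level $\alpha$ \emph{exactly}, which requires either an implicit atomlessness assumption on the law of $f^*(x)$ under $\datadistro$ or allowing a randomized threshold on the level set (this is what the bare ``$\exists\,\tau_\alpha$'' in the statement is quietly assuming); (ii) the degenerate case $\beta = 1$, where $\gandist = \datadistro$, $f^* \equiv \tfrac12$, and the inequality holds only in the trivial sense that no attack can exceed $\tpr = \alpha$; and (iii) the set where $\datadistro(x) = 0$, which is handled by the usual likelihood-ratio convention $r = +\infty$ there (on which $F^*$ already rejects, consistent with $f^* = 1$). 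I would also state the FPR/TPR $\leftrightarrow$ Type-I error/power correspondence carefully at the outset, since the paper's framework is phrased through the mixture $\midist$ rather than as a two-hypothesis test, and a clean invocation of Neyman--Pearson needs that bridge in place.
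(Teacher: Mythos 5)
Your proof is correct, but it reaches the conclusion by a genuinely different route than the paper's. The paper fixes an MIA $f$ with $\fpr_{\midist}(f)=\alpha$ and uses the mixture form of $\gandist$ to show that the detection TPR and the MIA TPR are affinely related at that level, $\tpr_{\mixgandist}(f) = \beta\,\tpr_{\midist}(f) + (1-\beta)\alpha$ (modulo a sign-convention swap of $\beta$ between the theorem statement and the proof), so the two constrained maximizations have the same argmax; it then applies Neyman--Pearson to the $\gandist$-vs-$\datadistro$ test and finishes by observing $\gandist/(\gandist+\datadistro)$ is a monotone transform of $\gandist/\datadistro$. You instead apply Neyman--Pearson directly to the underlying MIA test $\mathcal{T}$-vs-$\datadistro$ and then verify algebraically, by substituting $\gandist=\beta\datadistro+(1-\beta)\mathcal{T}$, that $f^*$ is a strictly increasing function of the relevant likelihood ratio $r=\mathcal{T}/\datadistro$ (derivative $(1-\beta)/[(1+\beta)+(1-\beta)r]^2>0$ for $\beta<1$), so thresholding $f^*$ \emph{is} the likelihood-ratio test. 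Both arguments invoke the mixture assumption at the decisive step; the paper uses it to equate two testing problems (which better exposes the motivation for training a detector), while you use it to show the two likelihood ratios are monotonically linked (which is algebraically tighter and makes the reparametrization explicit). Your edge-case remarks---randomized thresholding to hit $\alpha$ exactly, the degenerate $\beta=1$ case where $f^*\equiv 1/2$, and the $\datadistro(x)=0$ convention---go beyond what the paper states explicitly and are worth keeping.
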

\begin{proof}
Let $f: \mathcal{X} \to \{0, 1\}$ be an arbitrary MIA achieving FPR $\alpha$, that is $\mathbb{E}_{x \sim \mathcal{D}}[f(x)] = \alpha$. Then the TPR of $f$ at distinguishing samples from $G$ is $\mathbb{E}_{x \sim G}[f(x)] = \beta \mathbb{E}_{x \sim \mathcal{\mathcal{T}}}[f(x)] + (1-\beta)\mathbb{E}_{x \sim \mathcal{D}}[f(x)]$, since $G = \beta\mathcal{T} + (1-\beta)\mathcal{D}$. Substituting in the FPR of $f$, we have $\mathbb{E}_{x \sim G}[f(x)] = \beta \mathbb{E}_{x \sim \mathcal{\mathcal{T}}}[f(x)] + (1-\beta)\alpha$. But $\mathbb{E}_{x \sim \mathcal{\mathcal{T}}}$ is just the TPR at detecting samples from $\mathcal{T}$. We can take $\inf_{f: \fpr(f) = \alpha}$ of both sides of the prior equation, which shows that the MIA attack achieving optimal TPR at a fixed FPR $\alpha$ is exactly the optimal hypothesis test for distinguishing samples from $G$ from samples from $D$ with a fixed FPR $\alpha$. Next we note that the optimal hypothesis test in this latter scenario is characterized by the Neyman-Pearson Lemma \cite{neyman}: There exists $\tau_\alpha$ such that $f^*(x) = \textbf{1}\{\frac{G(x)}{D(x)} > \tau_{\alpha}\}$ achieves the maximum achievable TPR at fixed FPR $\alpha$, and so this is the optimal MIA for detecting training samples from $\mathcal{T}$ at a fixed FPR $\alpha$. It remains to be shown that we recover $f^*$ of this form from minimizing the classification error on $\frac{1}{2}D + \frac{1}{2}G$. Under our assumption that we compute a classifier that exactly minimizes the loss on the distribution, it is a standard result that the optimal classifier is the Bayes optimal classifier $B(x) = \frac{G(x)}{G(x) + D(x)}$. Then the result follows from noting that $B(x) > \tau $ is equivalent to $\frac{G(x)}{D(x)} > \frac{1}{\frac{1}{\tau} - 1}$. 
\end{proof}

In practice of course we don't have access to the densities $\gandist(x), \datadistro$. Rather than approximate the likelihood ratio directly as in \cite{refdata}, we observe that as long as (i) the hypothesis class $\mathcal{F}$ is sufficiently rich that $f^*(x)$ is closely approximated by a function in $\mathcal{F}$, standard results on the consistency of the MLE show that $$f^* = \argmin_{f \in \mathcal{F}}-\mathbb{E}_{x \sim \gandist}[\log f(x)] - \mathbb{E}_{x \sim \datadistro}[\log{1-f(x)}],$$ and so we can compute $f^*$ implicitly by minimizing the binary cross-entropy loss classifying samples from $\mixgandist$! Note that we aren't restricted to use the negative log-likelihood, any Bayes consistent loss function \cite{bayesconst} will suffice as a surrogate loss for the classification step; we use negative log-likelihood in our experiments. The assumption $\gandist$ is a simple mixture distribution is a stronger assumption, and Theorem~\ref{thm:opt} is better viewed as showing why our $\detector$ outperforms the random baseline by a significant margin, but is likely far from the information theoretically optimal MIA. We note also that the widely observed phenomenon of partial mode-collapse \cite{modecollapse}, the tendency of GANs in particular to regurgitate a subset of their training data rather than learning the entire distribution, is another motivation for expressing $\gandist$ as a mixture of $\mathcal{T}$ and $\datadistro$.

\subsection{Distance Based Attacks.} \label{onedist} 
The distance-based black-box attacks described in this section are based on the work of ~\cite{chen_gan-leaks_2020}. The intuition behind the attack is that points $\tau$ that are closer to the synthetic sample than to the reference samples are more likely to be classified as being part of the training set. We define $3$ reconstruction losses that we use to define $3$ types of distance-based attacks. Given a distance  metric $\delta$, a test point $\tau \sim \midist$, and samples $X_G \sim \gandist, |X_G| = n$ from the target GAN, we define the generator reconstruction loss $R(\tau |X_G) = \mathop{\arg \min}\limits_{s \in X_G}\delta(s_i, \tau)$. Similarly, given a distance metric $\delta$, a test point $\tau$ and reference samples $X_R \sim \datadistro$,  we define the reference reconstruction loss $R_{ref}(\tau |X_R) = \mathop{\arg \min}\limits_{x \in X_R}\delta(x, \tau)$. Then we can define the relative reconstruction loss, $R_{L}(\tau|X_G,X_R) =  R(\tau|G) -  R_{ref}(\tau|X_R)$. In the \textit {two-way distanced-based} MI attack, we compute  the relative  reconstruction loss, $R_{L}(\tau|G, X_R)$, and predict $\tau \in \xtrain$ if $x > \lambda$ for some threshold $\lambda$. In the \textit {one-way distanced-based} MI attack, we substitute the relative reconstruction loss with the \textit{reconstruction loss relative to the GAN}, $R_{L}(\tau|G)$. 

Given a set $\Delta$  of distance  metrics, for example hamming  and Euclidean distance, we can naturally define a weighted distance-based attack that computes a weighted sum of the reconstruction losses for each distance metric:

\[ \mathcal{W} =\sum\limits_{\delta \in \Delta} \alpha_{\delta} (R^{\delta}(\tau|G) -   R_{ref}^{\delta}(\tau|X_R) ) = \sum\limits_{\delta \in \Delta} \alpha_{\delta} (   R_{L}^{\delta}(\tau|G,X_R) ) \]
The notations $R^{\delta}(\tau|G)$ and $R_{ref}^{\delta}(\tau|X_R)$  emphasize dependence on the distance metric. The weighting parameter $\alpha_{\delta}$  is a measure of how important the associated distance metric is for membership inference,  and could be selected based on domain knowledge. 

In the genomic data setting, we also implement the distance-based attack of  ~\cite{homer_resolving_2008} that is tailored to attacks on genomic data. An improved   ``robust" version of this attack was proposed in ~\cite{robusthomer}. The detailed outline of the robust version of this attack is presented  in Algorithm~\ref{alg:homer} in the Appendix.

\section{Attacks on Tabular Data GANs}
In this section we discuss MIA attack success against a variety of GAN architectures trained on tabular data. We evaluate $7$ attacks across datasets of dimension $d = 805, 5000, 10000$. We first discuss the genomic datasets we used to train our target GANs and subsequently evaluate our attack methods.

\subsection{Experimental Details}
We evaluate our attacks on genomics data from two genomics databases 1000 Genomes (1KG) ~\cite{10002015global}  and dbGaP ~\cite{mailman2007ncbi}). We pre-processed the raw data from dbGaP to Variant Call  Format (VCF) ~\cite{vcf} with PLINK ~\cite{plink}, we used the pre-processed 1KG data provided by ~\cite{yelmen_creating_2021} without any further processing. For both datasets we sub-sampled the features to create three datasets of dimensions $\{805,5000,10000\}$ by selecting regular interval subsets of the columns. For each sub-sampled dataset, we trained two types of target GANs: the vanilla GAN \cite{goodfellow14} using the implementation from \cite{yelmen_creating_2021}, and the Wasserstein GAN with gradient policies (WGAN-GP) \cite{gulrajani_improved_2017}. For the 1KG dataset we train the target GAN on $3000$ samples, and use a further $2008$ samples to train the $\detector$. We evaluate our attacks using another held out sample of $500$ test points, and $500$ randomly sampled points used to train the target GAN. On dbGaP we train the target GAN,  $\detector$, and evaluate on $6500, 5508, 1000$ data points respectively. Table \ref{tab:data-config} in the Appendix summarizes this setup. Figures $7a - 7f$ depict the top $6$ principal components of both the training data and
synthetic samples for each configuration in Table~\ref{tab:data-config}, showing visually that our GAN samples seem to converge in distribution to the training samples. We further verify that our GANs converged and did not overfit to the training data in Appendix~\ref{sub:overfit}. In total, $7$ attack models  (\textit{one-way distance}, \textit{two-way distance}, \textit{robust  homer}, \textit{weighted distance}, \textit{\detector}, \textit{ADIS}, \textit{DOMIAS})  were evaluated on different genomic data configurations. Each of our distance attacks we introduce in Section~\ref{onedist} requires a further choice of distance metric, $\delta$, that  might be  domain-specific and influenced by the nature of the data. We tested our attacks using Hamming distance and Euclidean distance, finding that Hamming distance performs better across all settings, and so we use that as our default distance metric. For the Robust Homer attack, in order to compute the synthetic data mean and reference data means we use all of the reference samples except one held-out sample, and an equal number of samples from the generator. We defer further details on detector architecture and training hyperparameters to Appendix~\ref{appendix:architecure}. Table~\ref{tab:data-distin2} Appendix ~\ref{par:score} lists the $\detector$'s test accuracy on $\mixgandist$, verifying we are successful in training the $\detector$. The  details for ADIS training are discussed in detail in Appendix~\ref{adis}, but at a high-level ADIS uses the same basic architecture as the $\detector$, but augments the input point $x$ with distance-based statistics, and the DOMIAS likelihood statistic. 

\subsection{MIA Success on Genomic GANs}
We now discuss the relative attack success of our methods, reporting full ROC curves plotted on the log-log scale (Figures ~\ref{fig:roc_gen},~\ref{fig:rev3} ), as well as Table~\ref{tab:summary} which reports the TPR at fixed low FPR rates for each attack. For both datasets, we trained average the ROC Curves and TPR/FPR results over $11$ training runs where each time we trained a new target GAN on a new sub-sample of the data. 

\begin{figure}[!htp]
 \centering
 \subfloat[1KG, 805 SNPs, Vanilla GAN]{\includegraphics[width=.3\textwidth]{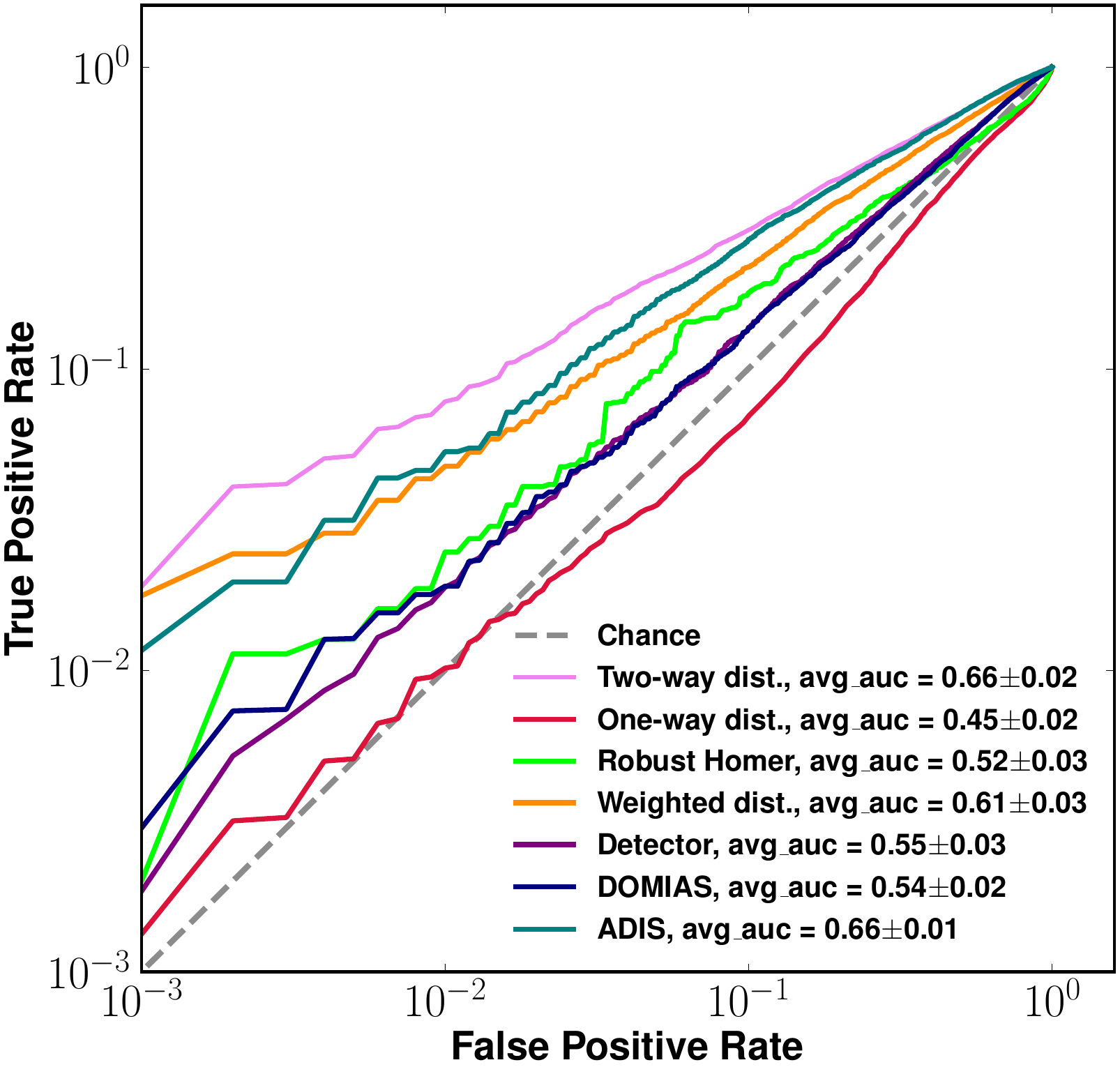}\label{fig:att1}}%
 \subfloat[1KG, 5k SNPs,  Vanilla GAN]{\includegraphics[width=.3\textwidth]{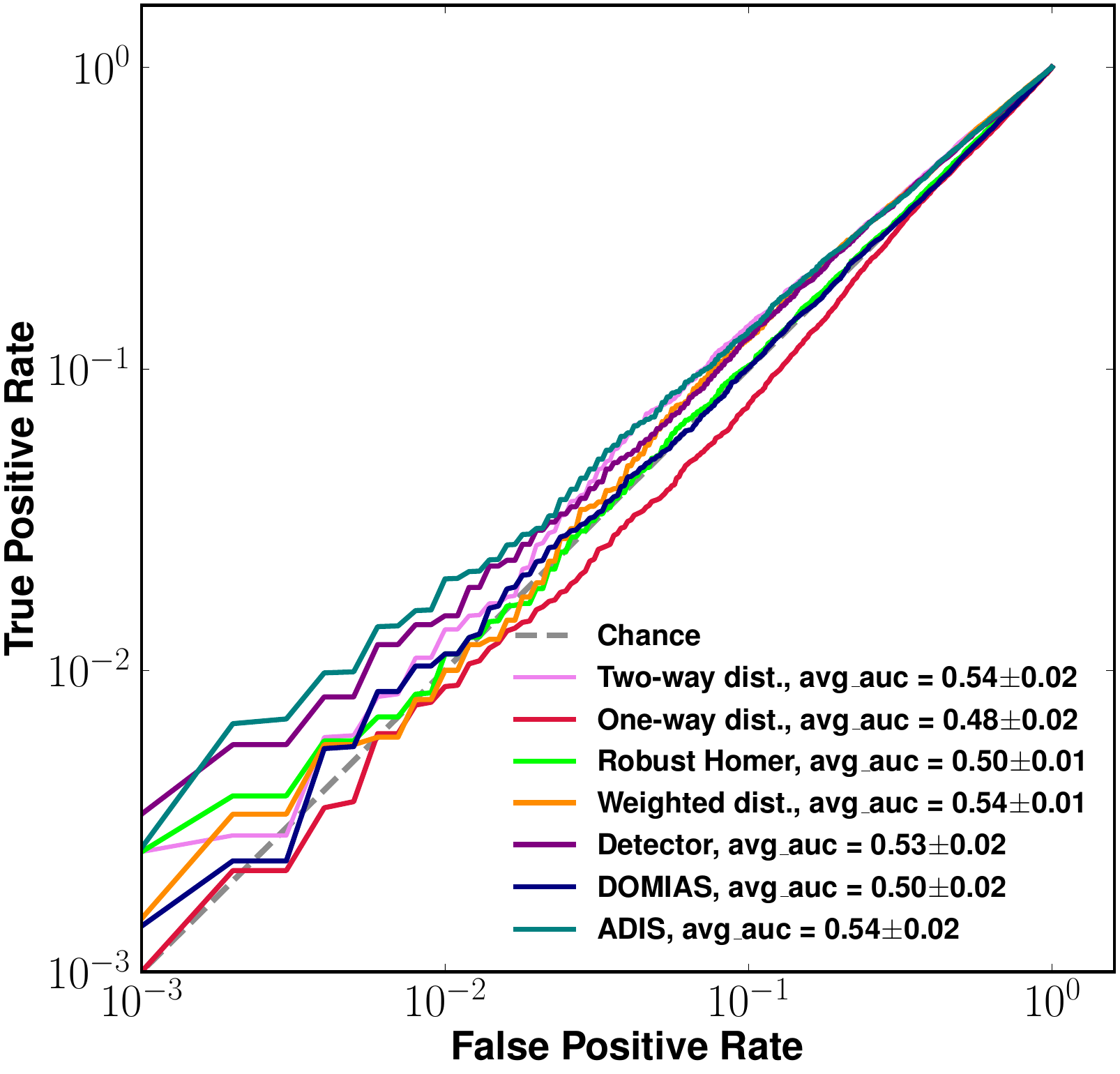}\label{fig:att2}}
 \subfloat[1KG, 10k SNPs, Vanilla GAN]{\includegraphics[width=.3\textwidth]{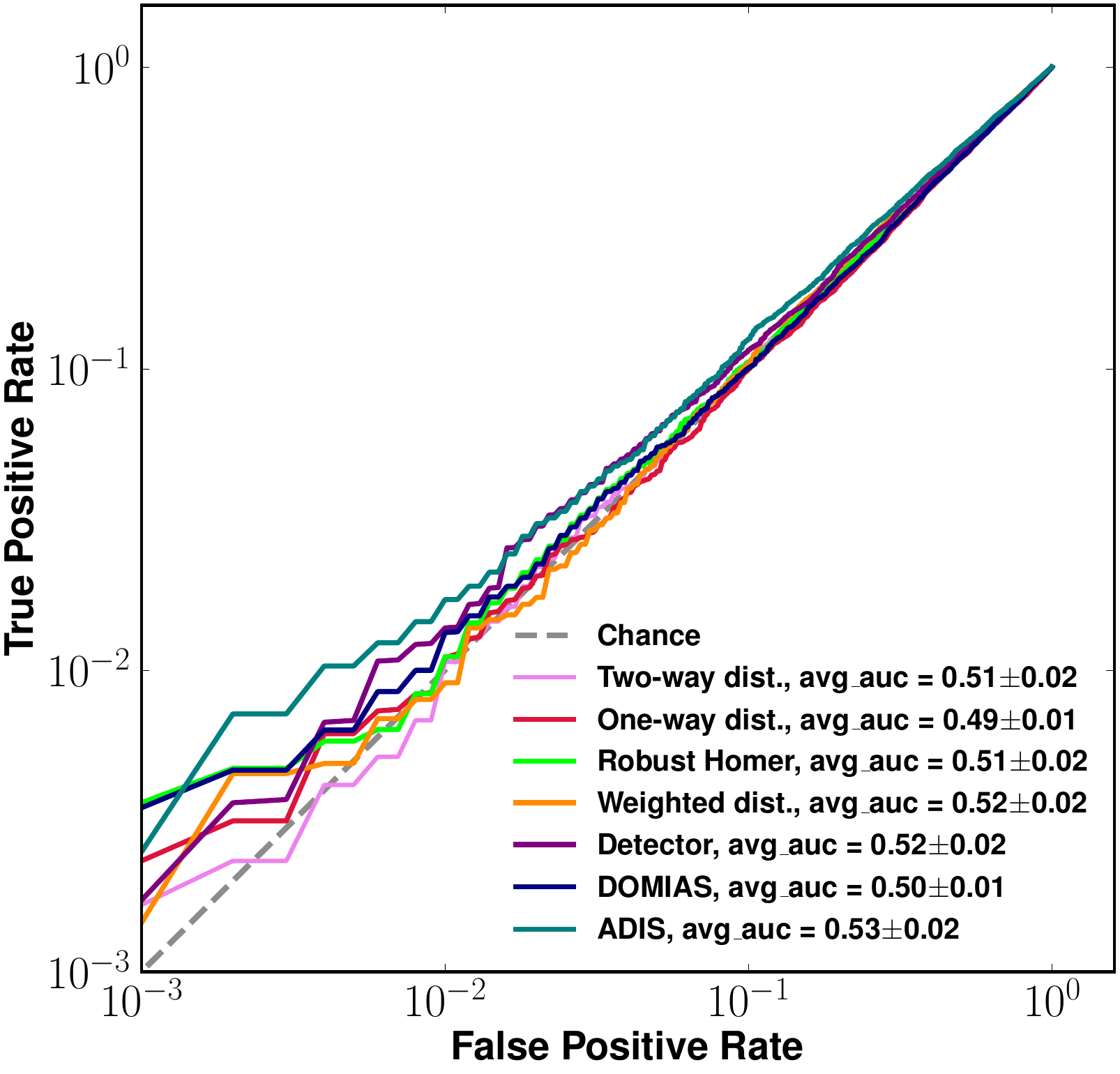}\label{fig:att3}} \\
  \subfloat[1KG, 805 SNPs, WGAN-GP]{\includegraphics[width=.3\textwidth]{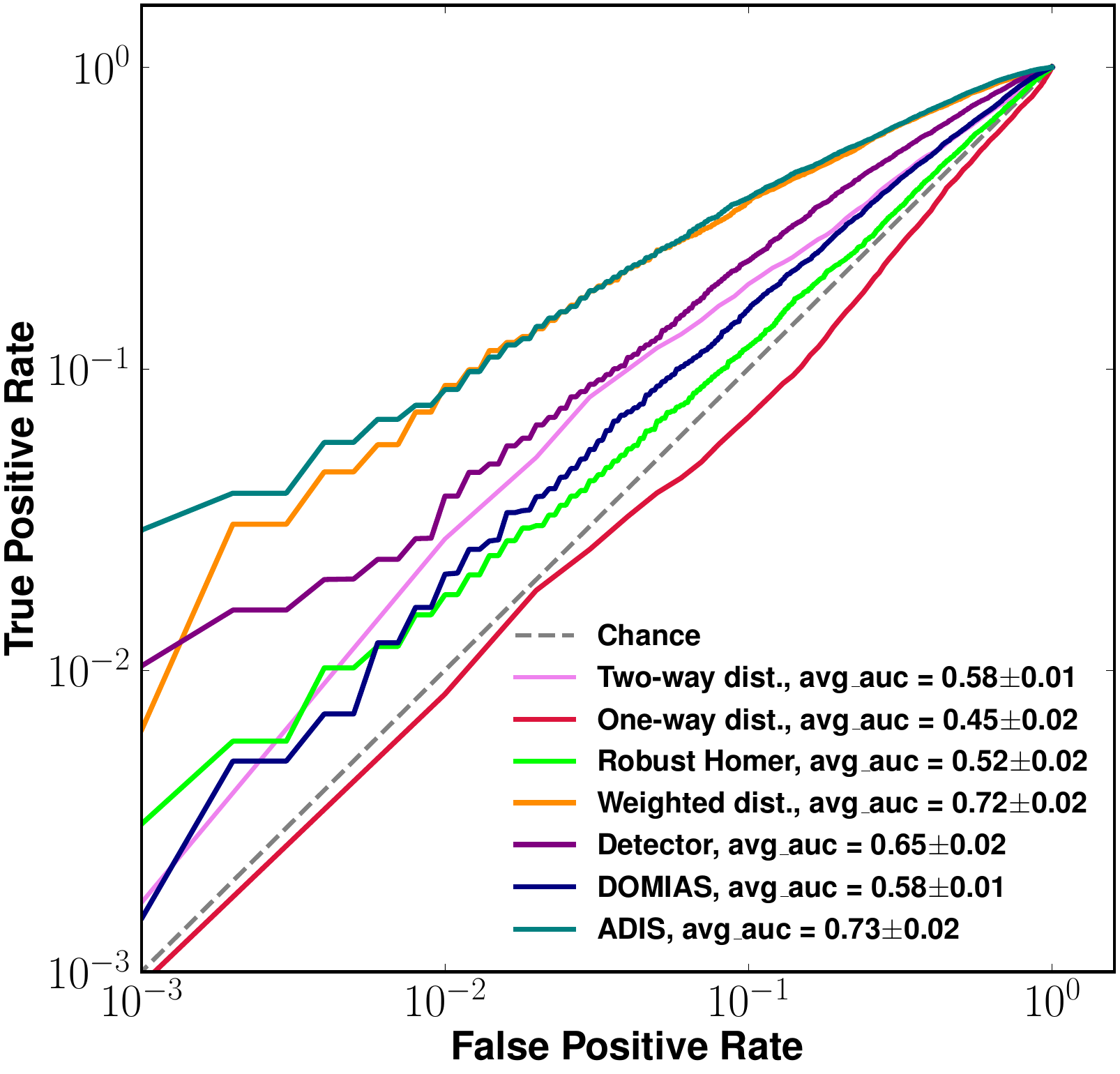}\label{fig:rev1}}%
 \subfloat[1KG, 5k SNPs,  WGAN-GP]{\includegraphics[width=.3\textwidth]{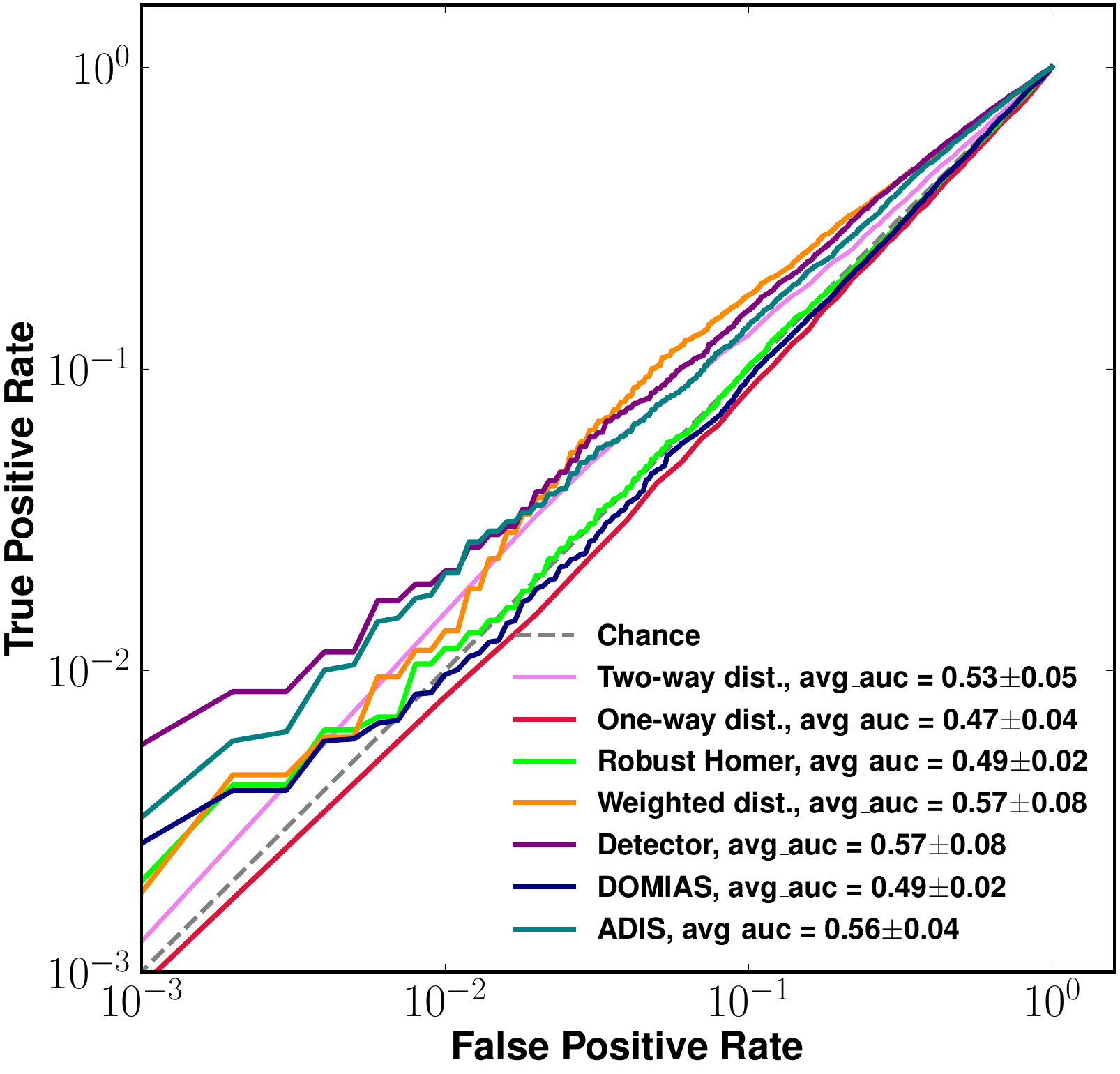}\label{fig:rev2}} 
 \subfloat[1KG, 10k SNPs, WGAN-GP]{\includegraphics[width=.3\textwidth]{ 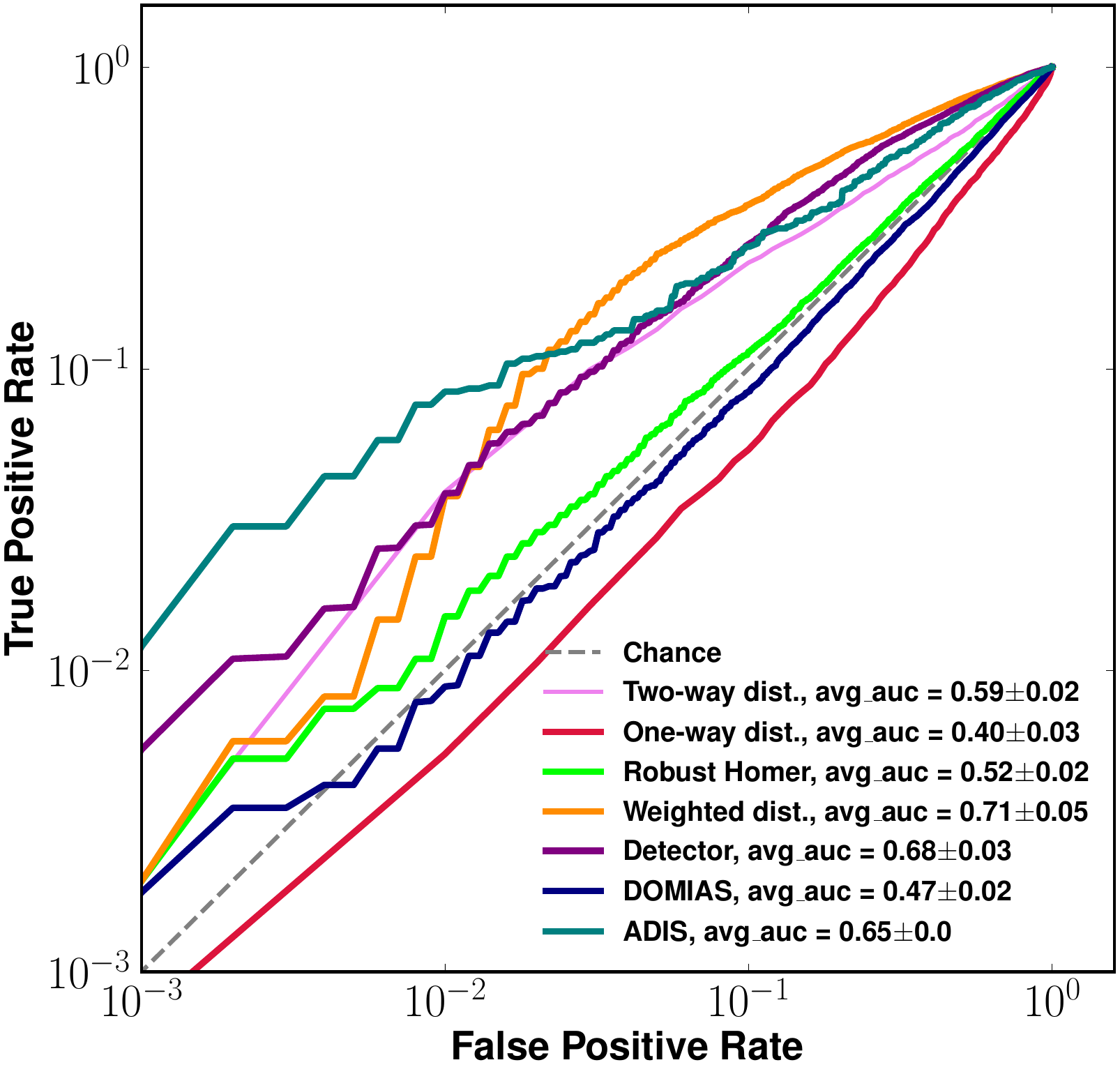}\label{fig:rev3}}\\
 \subfloat[dbGaP, 805 SNPs.,Vanilla]{\includegraphics[width=.3\textwidth]{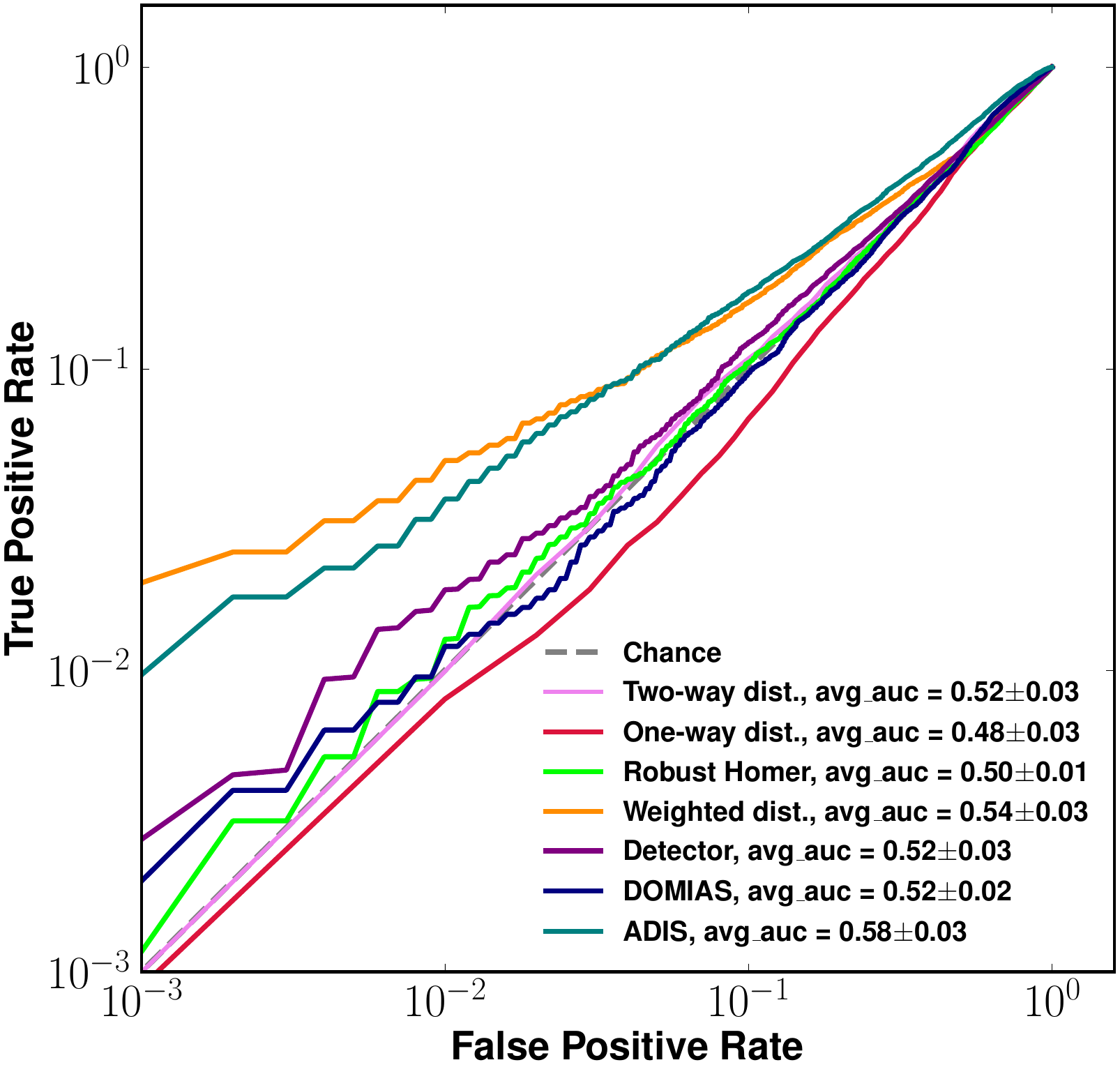}\label{fig:vanil_db1}}%
 \subfloat[dbGaP, 5k SNPs.,Vanilla]{\includegraphics[width=.3\textwidth]{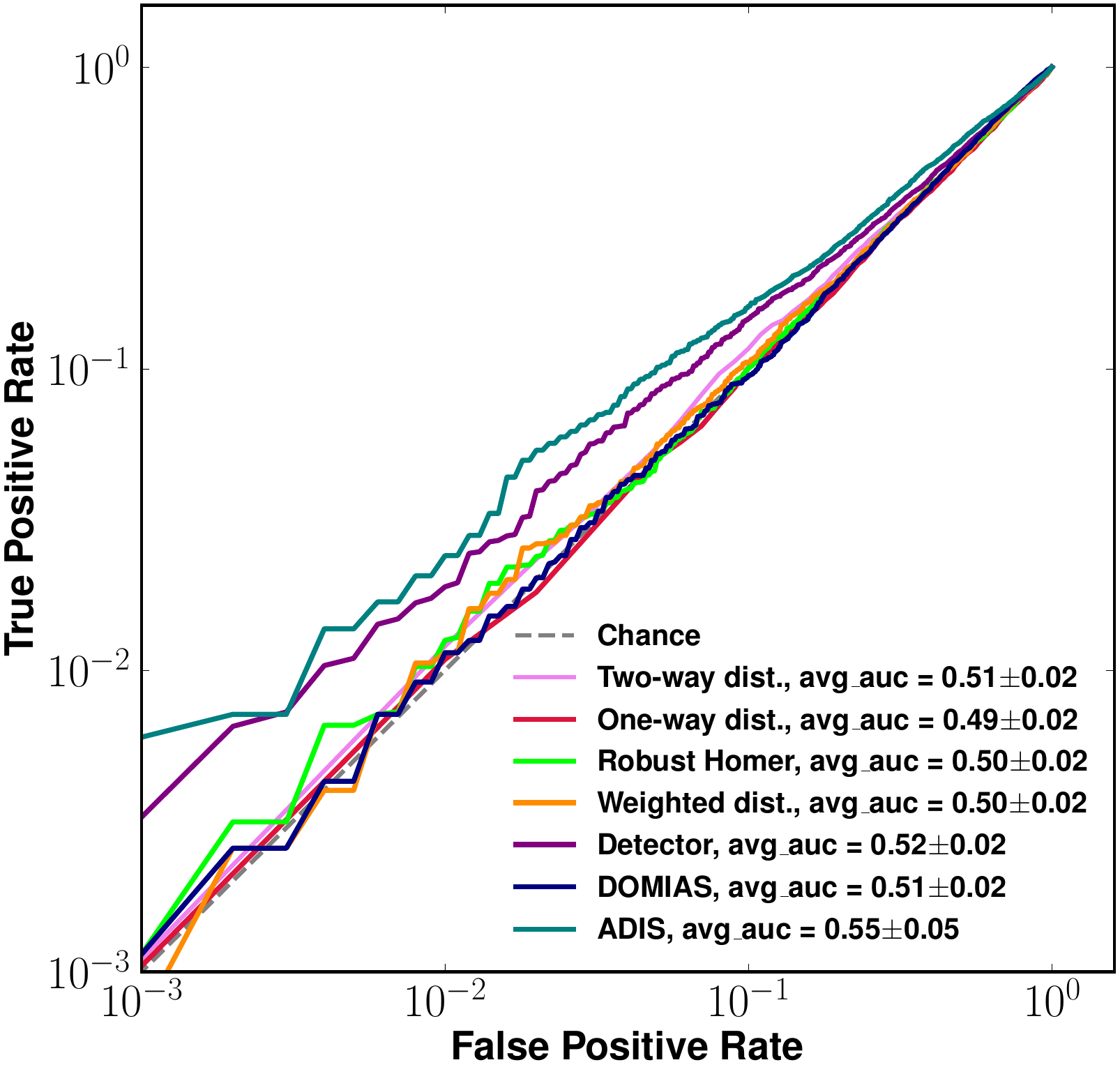}\label{fig:vanil_db2}}%
 \subfloat[dbGaP, 10k SNPs.,Vanilla]{\includegraphics[width=.3\textwidth]{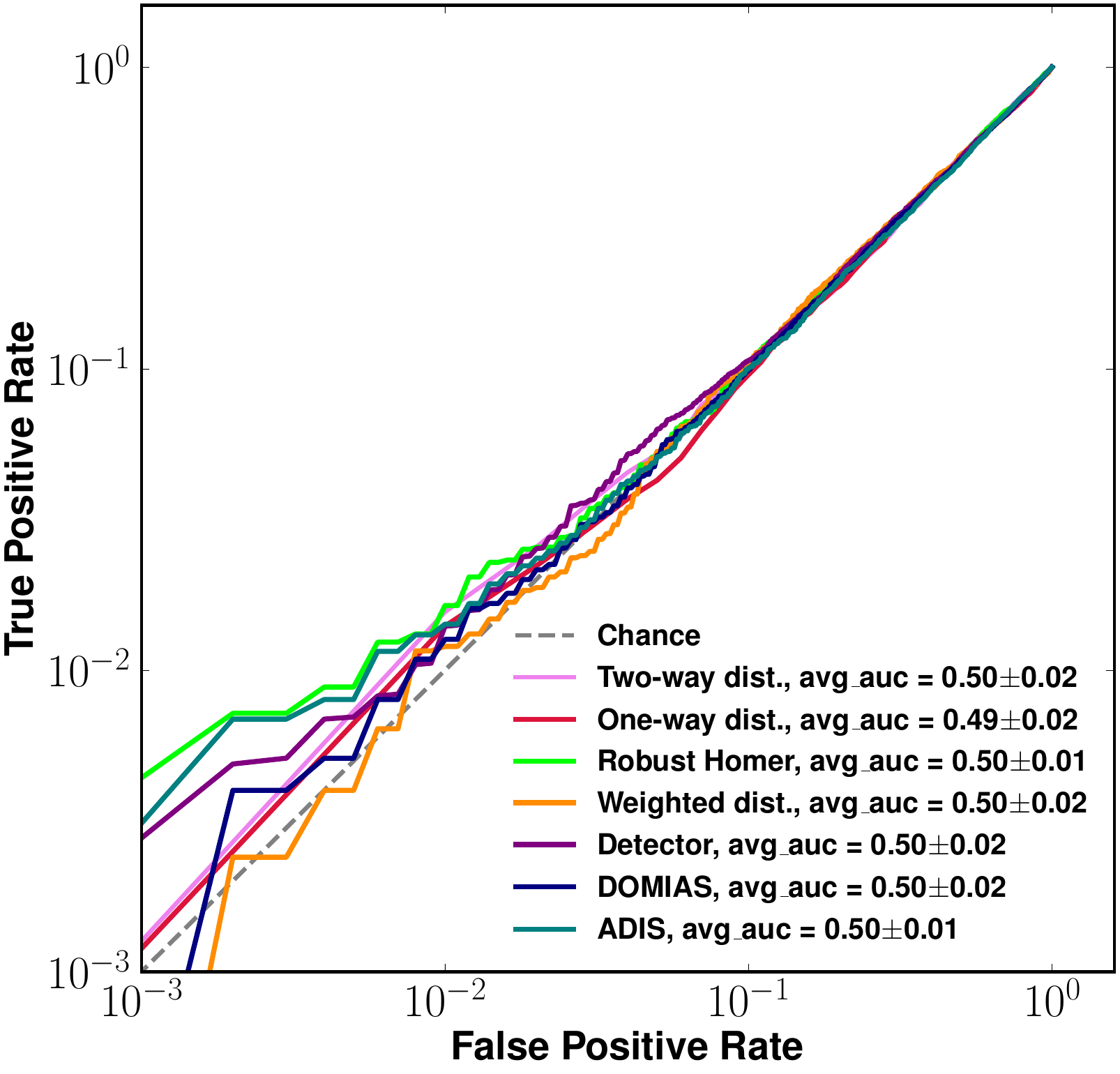}\label{fig:vanil_db3}}\\
 \subfloat[dbGaP, 805 SNPs.,WGAN-GP]{\includegraphics[width=.3\textwidth]{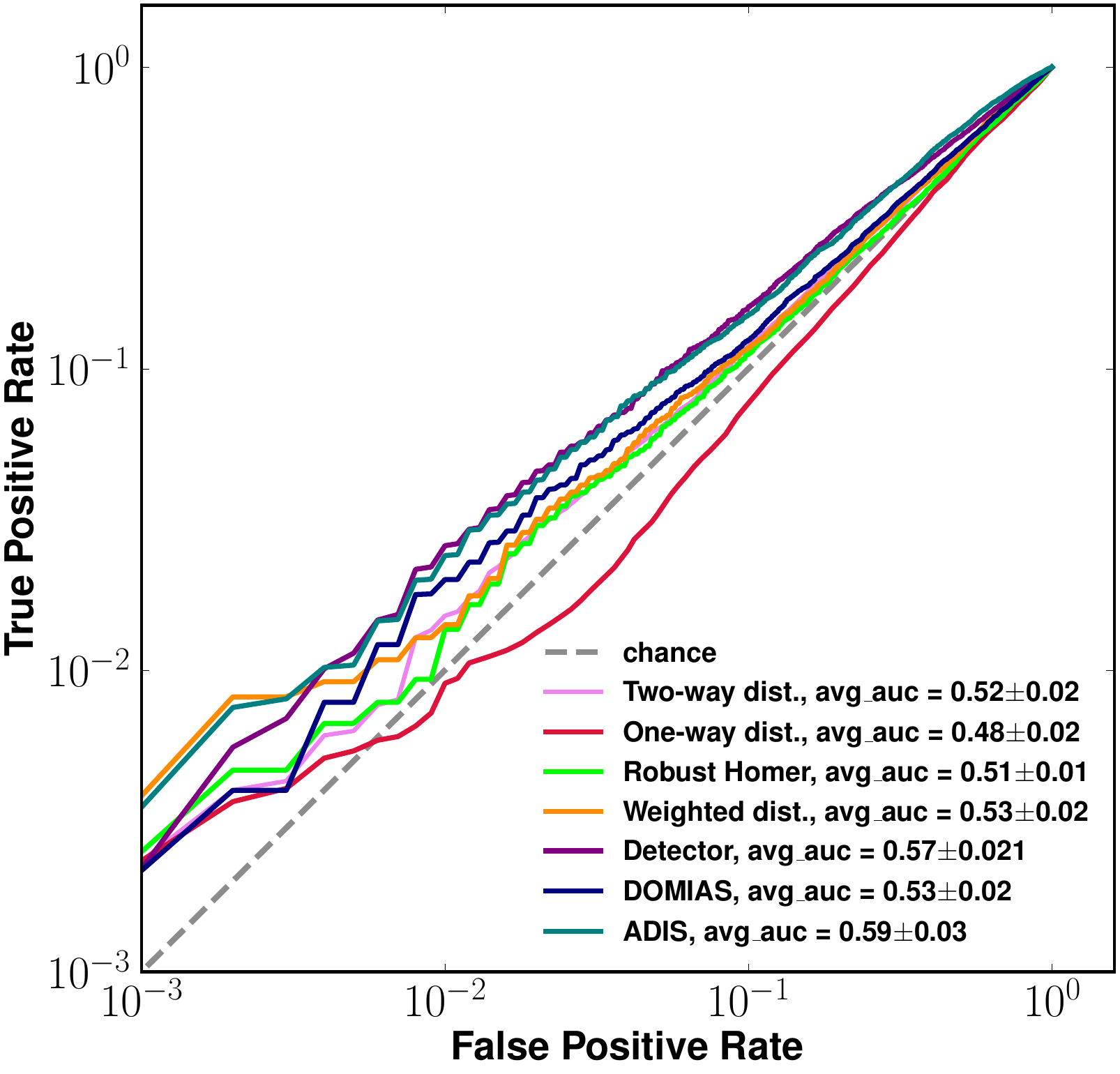}\label{fig:att4}}%
 \subfloat[dbGaP, 5k SNPs.,WGAN-GP]{\includegraphics[width=.3\textwidth]{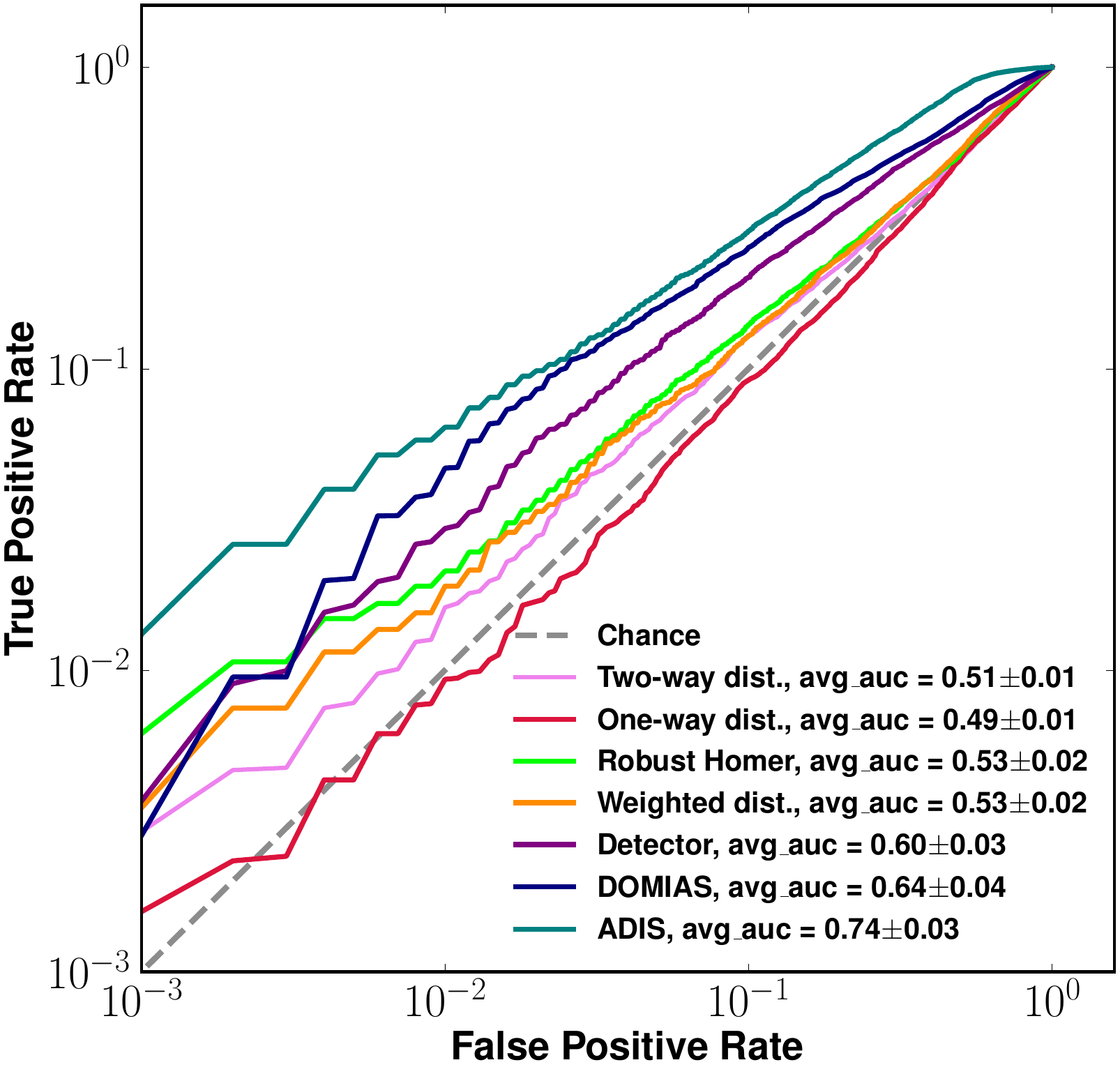}\label{fig:att5}}
 \subfloat[dbGaP, 10k SNPs.,WGAN-GP]{\includegraphics[width=.3\textwidth]{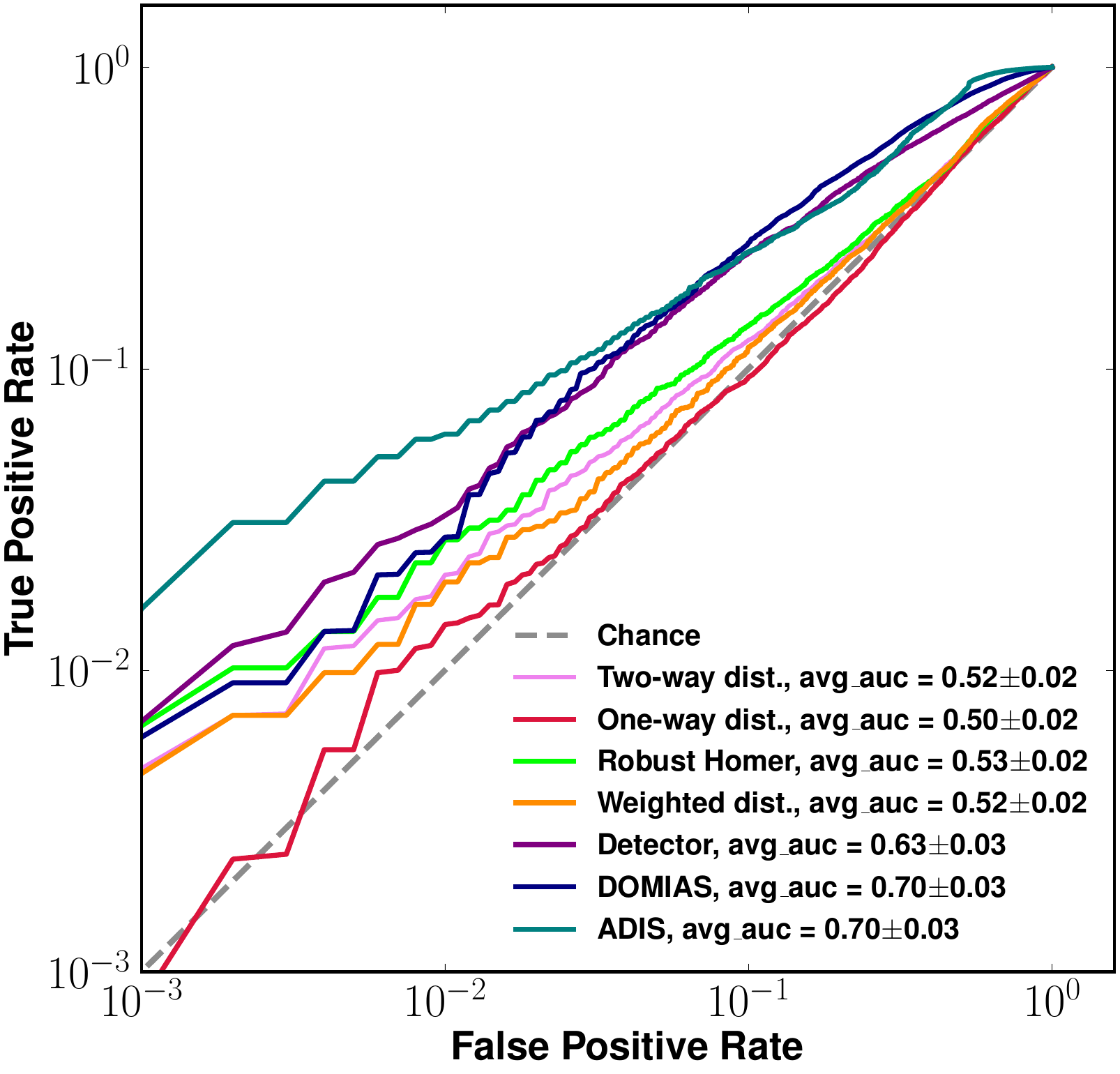}\label{fig:att6}}%
 \caption{ log-log ROC Curves for Vanilla GAN and WGAN-GP trained on 1KG and dbGaP.}
\label{fig:roc_gen}
\end{figure}

Inspecting the results on 1KG in Figure~~\ref{fig:roc_gen}, we see that when the target GAN is the Vanilla GAN, at $805$ SNPs the two-way distance attack achieves both the highest TPRs at low FPRs, and the highest AUC overall at $.66$, with all the attacks except the one-way distance attack outperforming the random baseline (curves above the diagonal) by a significant margin. As the number of SNPs increases to $5K$ and then $10K$, we see that the two-way distance attack performs worse than ADIS and the $\detector$, even performing worse than random guessing at $10K$. As the number of SNPs increases, the relative performance of the Robust Homer Attack increases, which we also observe in the WGAN-GP results, and on the dbGaP dataset. When the target GAN is a WGAN-GP, detector-based attacks outperform distance-based attacks or DOMIAS at every SNP dimension, which we also observe for the dbGaP dataset. While the trends in terms of relative performance of different attack methods are consistent across datasets, the actual levels of privacy leakage differ. Inspecting the dbGaP results in Figure ~\ref{fig:rev3} we see lower AUCs across the board, although we still observe high TPRs at low FPRs. 

More generally, while the AUC results for the attack methods indicate moderately low privacy leakage relative to white-box attacks against the discriminator \cite{LOGAN2,chen_gan-leaks_2020,Mi-montecarlo} or those reported against diffusion models \cite{extract_diff}, for the more meaningful metric of FPR at low fixed TPRs \cite{firstprinciple}, Table~\ref{tab:summary} shows that for fixed FPRs ADIS in particular achieves TPRs that are as much as $10$x the random baseline (FPR = TPR). ADIS works especially well against WGAN-GP trained on dbGaP, with improved attack success for larger dimension. 

In summary, while distance-based attacks are occassionally competitive with the detector-based attacks, the $\detector$s  have more robust attack performance as we vary the data dimension, target GAN architecture, and dataset. While these factors impact the overall privacy leakage, they have little effect on the relative ordering of the attacks. While overall AUC is low, when we consider TPR at a fixed FPR, ADIS achieves attack results that are in some cases an order of magnitude better than the random baseline, evidence of serious privacy leakage. Moreover, the fact that ADIS consistently outperforms the $\detector$ means that augmenting the feature space with distance-based features improved attack success relative to using the input point alone. This could be due to the fact that given our relatively small datasets, starting from the distance-based metrics enabled the algorithm to learn features more efficiently.

\section{Attacks on Image GANs}

\begin{figure}[!htp]%
 \centering
 \subfloat[Progressive GAN, $\cifar$]{\includegraphics[width=.3\textwidth]{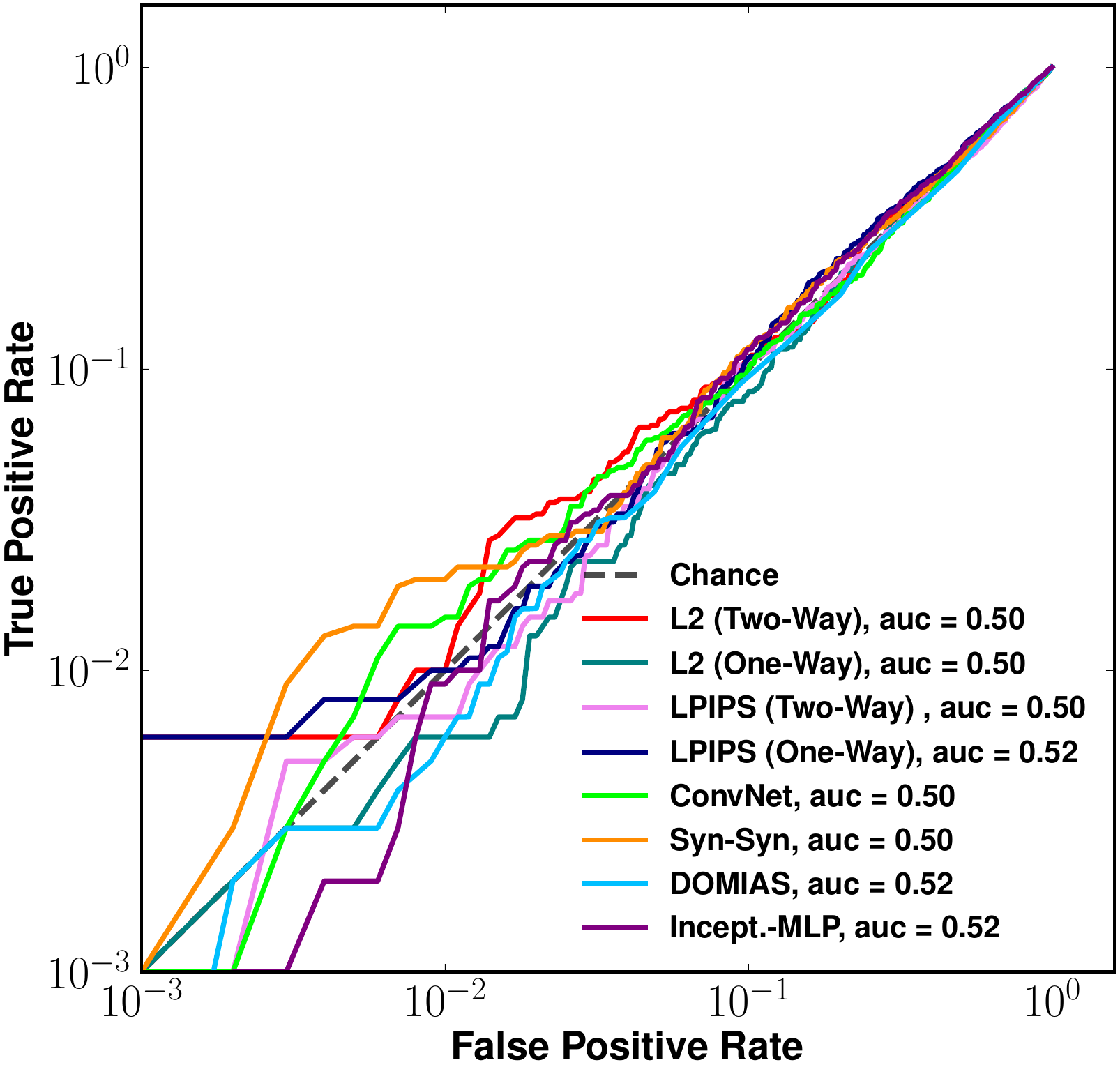}\label{fig:attIm3}}%
 \hspace{1cm}
 \subfloat[ContraGAN, $\cifar$]{\includegraphics[width=.3\textwidth]{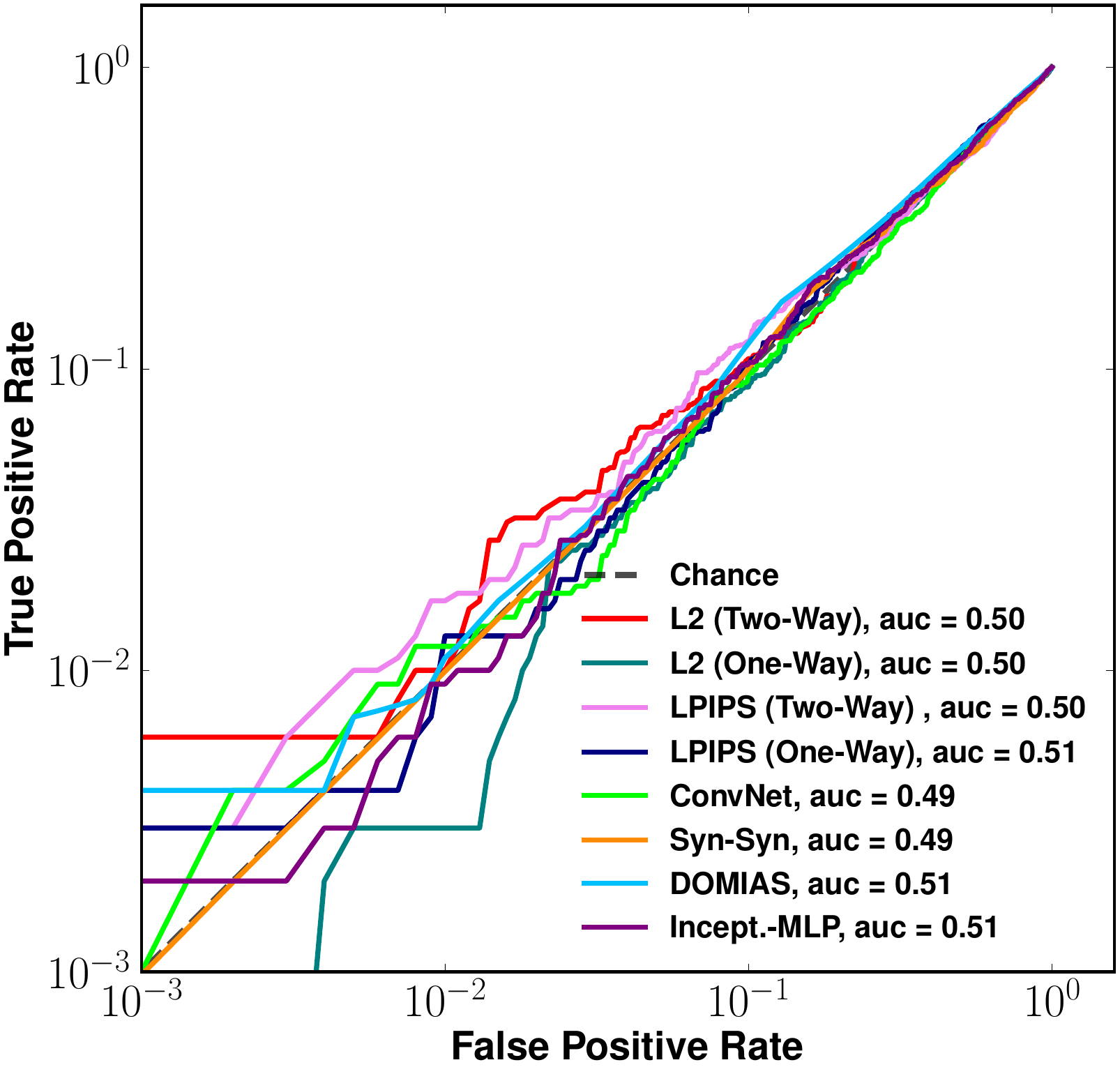}} \\
 \subfloat[BigGAN, $\cifar$]{\includegraphics[width=.3\textwidth]{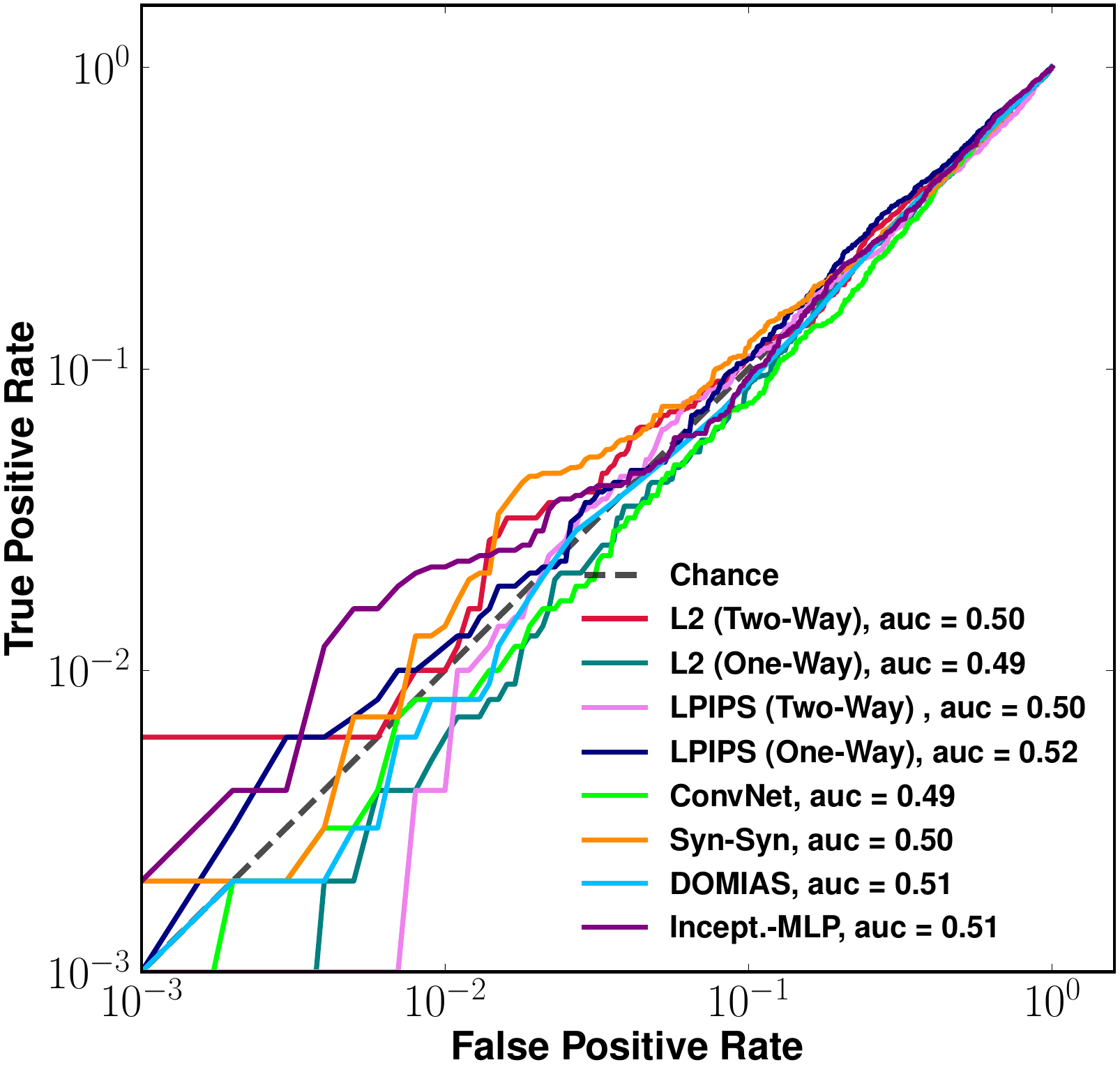}\label{fig:attIm1}}%
 \hspace{1cm}
 \subfloat[DCGAN, $\cifar$]{\includegraphics[width=.3\textwidth]
{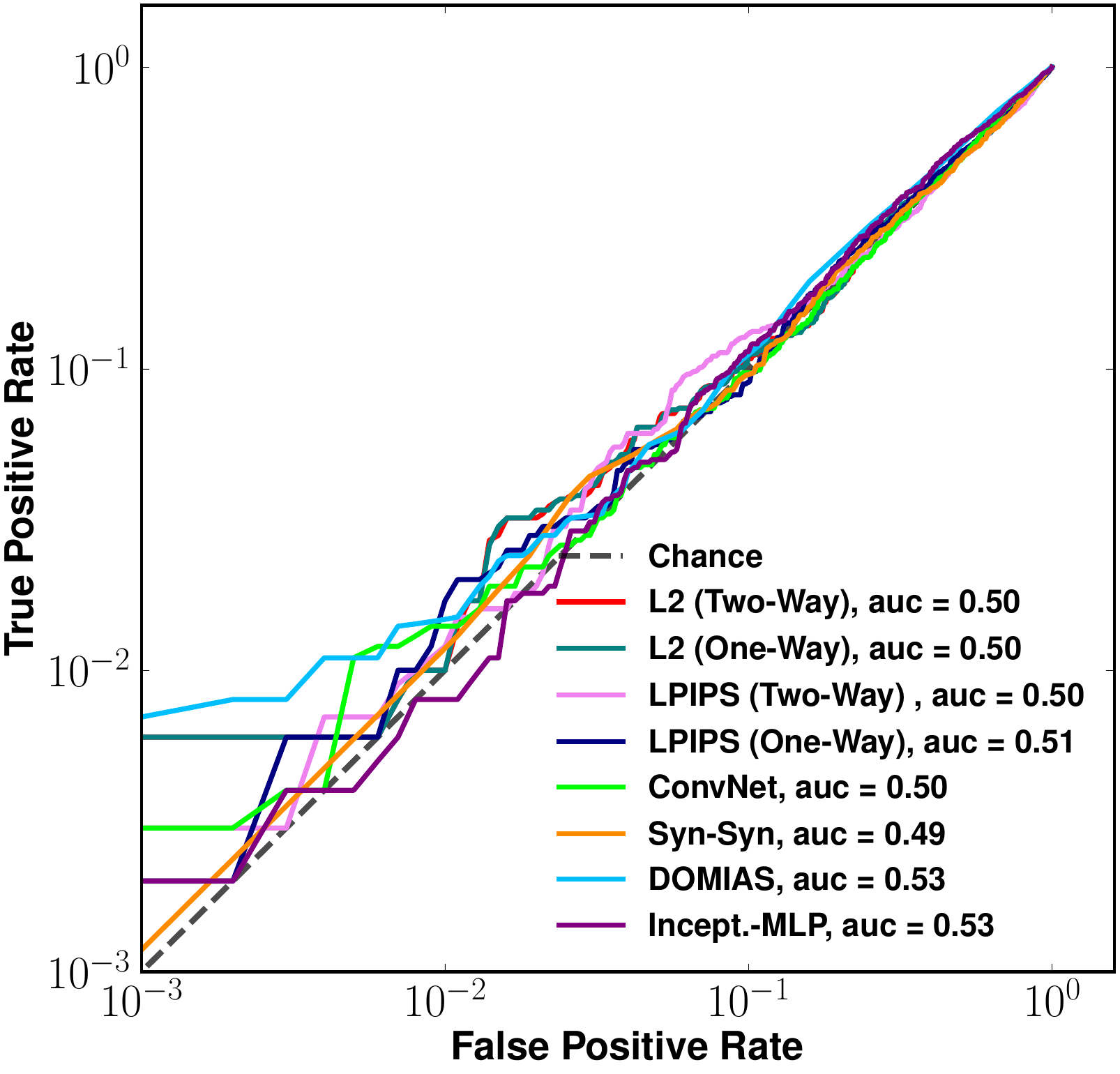}\label{fig:attIm2}}
 \caption{log-log ROC curves for detector-based MIAs against different GAN architectures trained on $\cifar$. While the AUCs are lower than the genomic settings, the different variants of the detector-based attacks work very well at low FPRs.}
 \label{fig:imdis2}
\end{figure}

In this Section, we evaluate our attacks against $4$ GAN architectures trained on images from the $\cifar$ dataset. We observe that relative to high dimensional tabular datasets, image GANs exhibit less privacy leakage on average, with AUCs that are barely above the random baseline. However, for the most meaningful metric, TPR at low FPRs, our detector-based attacks, and in some cases the distance-based attacks, are again able to achieve TPRs that are $2-6 \times$ higher than the random baseline. 

\subsection{Experimental Details}
We focused on GANs trained on the $\cifar$ image dataset~\cite{cifar}. $\cifar$  consists of $60000$ images across $10$ object categories, and so when sub-sampling the dataset for training the target GAN, $\detector$s, and evaluation, we performed stratified sampling to ensure balance across categories. The test sample size was set to $2000$ such that we have $200$ objects from each category, with $50000$ images used to train the target GAN, and $9000$ used to train the $\detector$s. 

Four state-of-the-art GAN variants were trained: large-scale GANs for high fidelity image synthesis ~\cite{biggan}, Conditional GANs with Projective Discriminator (PDGAN) ~\cite{cgan}, Deep Convolutional Generative Adversarial Networks (DCGAN) ~\cite{dcgan}, and Contrastive Learning for Conditional  Image Synthesis ~\cite{contragan}. We trained the GANs using the implementations in the Pytorch StudioGAN library \cite{studio}.

\sn{For the image GANs we change the distance metrics to $\ell_2$ and LPIPS, one or two sentences about why we did this, and what LPIPS is}\luk{the details about the choice of these 2 distance metrics are in the appendix. Appendix \ref{distance_image_metrics}. It was moved there becuase we had all distance-based attack moved to appendix.}
We observed that training the $\detector$ on image data proved a more delicate process than distinguishing genomic data -- as a result we implemented $3$ $\detector$ architectures. \sn{for each of these variants put in the parenthesis the same name we refer to them in the plot as} The first variant trains a CNN-based $\detector$ ~\cite{imagenet, underatsningCNN}. In the second variant we extracted features from the training images using a pre-trained model (Inception model architecture) ~\cite{inception} and fed this to an MLP pipeline. In the final variant, we trained a variational auto-encoder (VAE) ~\cite{vae} on the $9000$ reference data images, and then trained a CNN classifier ~\cite{imagenet} on synthetic samples from the VAE (label $0$) and synthetic samples from the target GAN (label $1$). This idea of training a second generative model on the reference data was proposed in \cite{LOGAN2}. Further details on $\detector$ architectures, training parameters, and convergence plots are deferred to Section~\ref{sec:gen_convergence} in the Appendix. \sn{Reference the right appendix section here}

\subsection{MIA Attack Success}
Recall that for the image GANs we have $3$ variants of the detector attack, DOMIAS, and two distance measures ($\ell_2$, LPIPS \cite{lpips}) were utilized for the distance-based attack. Each distance measure contributes $2$ attack types: one-way and two-way, and so for the full results in the Appendix we have a total of $8$ attacks. We summarize the results in Figure~\ref{fig:imdis2} and Table~\ref{tab:summary}  We see that at low FPRs $< 10^{-2}$, the $\detector$ that achieves the highest TPR varies based on the target GAN architecture. For all of the target GANs Incept-MLP has the highest AUC of the detector-based methods, but performs poorly at low FPRs, albeit on BigGAN, where its AUC is about the same as the other methods.  DOMIAS performs better on a relative basis on the image GANs, with the best performance in terms of AUC and low FPRs on DCGAN and second best performance at low FPRs on ContraGAN. However, when the target GAN was Progressive GAN or BigGAn, DOMIAS performed worse than random guessing at low FPRs, and so its performance overall seems inconsistent. This is an interesting direction for future investigation.

\sn{Discuss the experimental results!}
\section{Conclusion}
Our work provides the most thorough existing empirical analysis of attacks against GANs in the black-box setting. We prove theoretical results that build intuition for why a
network that can detect samples generated from the GAN can also detect samples from the training set (Theorem~\ref{thm:opt}), a result that has applications across all privacy attacks on ML models, not just GANs, and conduct an extremely thorough empirical evaluation of detector-based attacks, comparing their performance to existing black-box attacks across tabular and image domains. This work also raises several interesting directions for future research: although we expose significant privacy leakage from GANs, particularly at low FPRs, relative to other generative models explored in prior work, for example VAEs \cite{LOGAN} or diffusion models \cite{extract_diff}, black-box access to the generator seems much more private. Is this actually the case, or is it simply a matter of developing better attack methods? If so, can we prove theoretically why GANs appear to be more privacy-preserving than other generative models? 


\bibliographystyle{unsrt}  
\bibliography{references}  

\begin{thebibliography}{10}

\bibitem{goodfellow14}
Ian~J. Goodfellow, Jean Pouget-Abadie, Mehdi Mirza, Bing Xu, David Warde-Farley, Sherjil Ozair, Aaron Courville, and Yoshua Bengio.
\newblock Generative adversarial networks, 2014.

\bibitem{diffusion1}
Jascha Sohl{-}Dickstein, Eric~A. Weiss, Niru Maheswaranathan, and Surya Ganguli.
\newblock Deep unsupervised learning using nonequilibrium thermodynamics.
\newblock {\em CoRR}, abs/1503.03585, 2015.

\bibitem{diffusion2}
Jonathan Ho, Ajay Jain, and Pieter Abbeel.
\newblock Denoising diffusion probabilistic models.
\newblock {\em CoRR}, abs/2006.11239, 2020.

\bibitem{diff_audio}
Zhifeng Kong, Wei Ping, Jiaji Huang, Kexin Zhao, and Bryan Catanzaro.
\newblock Diffwave: A versatile diffusion model for audio synthesis, 2020.

\bibitem{diff_video}
Jonathan Ho, Tim Salimans, Alexey Gritsenko, William Chan, Mohammad Norouzi, and David~J. Fleet.
\newblock Video diffusion models, 2022.

\bibitem{sotaImage}
Lei Wang, Wei Chen, Wenjia Yang, Fangming Bi, and Fei~Richard Yu.
\newblock A state-of-the-art review on image synthesis with generative adversarial networks.
\newblock {\em IEEE Access}, 8:63514--63537, 2020.

\bibitem{sotaAudio}
Jungil Kong, Jaehyeon Kim, and Jaekyoung Bae.
\newblock Hifi-gan: Generative adversarial networks for efficient and high fidelity speech synthesis.
\newblock {\em Advances in Neural Information Processing Systems}, 33:17022--17033, 2020.

\bibitem{vid-image}
Ming-Yu Liu, Xun Huang, Jiahui Yu, Ting-Chun Wang, and Arun Mallya.
\newblock Generative adversarial networks for image and video synthesis: Algorithms and applications.
\newblock {\em Proceedings of the IEEE}, 109(5):839--862, 2021.

\bibitem{sotavidvid}
Ting{-}Chun Wang, Ming{-}Yu Liu, Jun{-}Yan Zhu, Guilin Liu, Andrew Tao, Jan Kautz, and Bryan Catanzaro.
\newblock Video-to-video synthesis.
\newblock {\em CoRR}, abs/1808.06601, 2018.

\bibitem{tabdataModelling}
Lei Xu, Maria Skoularidou, Alfredo Cuesta-Infante, and Kalyan Veeramachaneni.
\newblock Modeling tabular data using conditional gan.
\newblock {\em Advances in Neural Information Processing Systems}, 32, 2019.

\bibitem{effectivetabdata}
Zilong Zhao, Aditya Kunar, Robert Birke, and Lydia~Y. Chen.
\newblock Ctab-gan: Effective table data synthesizing.
\newblock In Vineeth~N. Balasubramanian and Ivor Tsang, editors, {\em Proceedings of The 13th Asian Conference on Machine Learning}, volume 157 of {\em Proceedings of Machine Learning Research}, pages 97--112. PMLR, 17--19 Nov 2021.

\bibitem{gigagan}
Minguk Kang, Jun-Yan Zhu, Richard Zhang, Jaesik Park, Eli Shechtman, Sylvain Paris, and Taesung Park.
\newblock Scaling up gans for text-to-image synthesis.
\newblock In {\em Proceedings of the IEEE Conference on Computer Vision and Pattern Recognition (CVPR)}, 2023.

\bibitem{Karras2021}
Tero Karras, Miika Aittala, Samuli Laine, Erik H\"ark\"onen, Janne Hellsten, Jaakko Lehtinen, and Timo Aila.
\newblock Alias-free generative adversarial networks.
\newblock In {\em Proc. NeurIPS}, 2021.

\bibitem{vqgan}
Katherine Crowson, Stella Biderman, Daniel Kornis, Dashiell Stander, Eric Hallahan, Louis Castricato, and Edward Raff.
\newblock Vqgan-clip: Open domain image generation and editing with natural language guidance.
\newblock In {\em European Conference on Computer Vision}, pages 88--105. Springer, 2022.

\bibitem{gansformer}
Drew~A. Hudson and C.~Lawrence Zitnick.
\newblock Generative adversarial transformers.
\newblock {\em CoRR}, abs/2103.01209, 2021.

\bibitem{chen_gan-leaks_2020}
Dingfan Chen, Ning Yu, Yang Zhang, and Mario Fritz.
\newblock {GAN}-{Leaks}: {A} {Taxonomy} of {Membership} {Inference} {Attacks} against {Generative} {Models}.
\newblock In {\em Proceedings of the 2020 {ACM} {SIGSAC} {Conference} on {Computer} and {Communications} {Security}}, pages 343--362, October 2020.
\newblock arXiv:1909.03935 [cs].

\bibitem{LOGAN2}
Jamie Hayes, Luca Melis, George Danezis, and Emiliano~De Cristofaro.
\newblock Logan: Membership inference attacks against generative models, 2018.

\bibitem{extract_diff}
Nicholas Carlini, Jamie Hayes, Milad Nasr, Matthew Jagielski, Vikash Sehwag, Florian Tramèr, Borja Balle, Daphne Ippolito, and Eric Wallace.
\newblock Extracting training data from diffusion models, 2023.

\bibitem{recent}
Boris van Breugel, Hao Sun, Zhaozhi Qian, and Mihaela van~der Schaar.
\newblock Membership inference attacks against synthetic data through overfitting detection, 2023.

\bibitem{cifar}
Alex Krizhevsky.
\newblock Learning multiple layers of features from tiny images.
\newblock 2009.

\bibitem{snp}
Ian~C Gray, David~A Campbell, and Nigel~K Spurr.
\newblock Single nucleotide polymorphisms as tools in human genetics.
\newblock {\em Human molecular genetics}, 9(16):2403--2408, 2000.

\bibitem{dbgap}
Kimberly~A Tryka, Luning Hao, Anne Sturcke, Yumi Jin, Zhen~Y Wang, Lora Ziyabari, Moira Lee, Natalia Popova, Nataliya Sharopova, Masato Kimura, et~al.
\newblock Ncbi’s database of genotypes and phenotypes: dbgap.
\newblock {\em Nucleic acids research}, 42(D1):D975--D979, 2014.

\bibitem{10002015global}
1000 Genomes~Project Consortium et~al.
\newblock A global reference for human genetic variation.
\newblock {\em Nature}, 526(7571):68, 2015.

\bibitem{homer}
Nils Homer, Szabolcs Szelinger, Margot Redman, David Duggan, John Tembe, Jill Muehling, John~V. Pearson, Dietrich~A. Stephan, Stanley~F. Nelson, and David~W. Craig.
\newblock Resolving individuals contributing trace amounts of {DNA} to highly complex mixtures using high-density {SNP} genotyping microarrays.
\newblock {\em {PLoS} Genetics}, 4(8):e1000167, August 2008.

\bibitem{LOGAN}
J~Hayes, L~Melis, G~Danezis, and E~De~Cristofaro.
\newblock Logan: Membership inference attacks against generative models.
\newblock In {\em Proceedings on Privacy Enhancing Technologies (PoPETs)}, volume 2019, pages 133--152. De Gruyter, 2019.

\bibitem{refdata}
Boris van Breugel, Hao Sun, Zhaozhi Qian, and Mihaela van~der Schaar.
\newblock Membership inference attacks against synthetic data through overfitting detection, 2023.

\bibitem{firstprinciple}
Nicholas Carlini, Steve Chien, Milad Nasr, Shuang Song, Andreas Terzis, and Florian Tram{\`{e}}r.
\newblock Membership inference attacks from first principles.
\newblock {\em CoRR}, abs/2112.03570, 2021.

\bibitem{shokri_membership_2017}
Reza Shokri, Marco Stronati, Congzheng Song, and Vitaly Shmatikov.
\newblock Membership {Inference} {Attacks} against {Machine} {Learning} {Models}, March 2017.
\newblock arXiv:1610.05820 [cs, stat].

\bibitem{yeom_privacy_2018}
Samuel Yeom, Irene Giacomelli, Matt Fredrikson, and Somesh Jha.
\newblock Privacy {Risk} in {Machine} {Learning}: {Analyzing} the {Connection} to {Overfitting}.
\newblock In {\em 2018 {IEEE} 31st {Computer} {Security} {Foundations} {Symposium} ({CSF})}, pages 268--282, July 2018.
\newblock ISSN: 2374-8303.

\bibitem{shokgrad}
Milad Nasr, Reza Shokri, and Amir Houmansadr.
\newblock Comprehensive privacy analysis of deep learning: Passive and active white-box inference attacks against centralized and federated learning.
\newblock In {\em 2019 {IEEE} Symposium on Security and Privacy ({SP})}. {IEEE}, may 2019.

\bibitem{self-mi}
Gilad Cohen and Raja Giryes.
\newblock Membership inference attack using self influence functions, 2022.

\bibitem{neel}
Martin Pawelczyk, Himabindu Lakkaraju, and Seth Neel.
\newblock On the privacy risks of algorithmic recourse, 2022.

\bibitem{pap}
Christopher~A. Choquette{-}Choo, Florian Tram{\`{e}}r, Nicholas Carlini, and Nicolas Papernot.
\newblock Label-only membership inference attacks.
\newblock {\em CoRR}, abs/2007.14321, 2020.

\bibitem{Mi-montecarlo}
Benjamin Hilprecht, Martin H{\"a}rterich, and Daniel Bernau.
\newblock Monte carlo and reconstruction membership inference attacks against generative models.
\newblock {\em Proc. Priv. Enhancing Technol.}, 2019(4):232--249, 2019.

\bibitem{yelmen_creating_2021}
Burak Yelmen, Aurélien Decelle, Linda Ongaro, Davide Marnetto, Corentin Tallec, Francesco Montinaro, Cyril Furtlehner, Luca Pagani, and Flora Jay.
\newblock Creating artificial human genomes using generative neural networks.
\newblock {\em PLOS Genetics}, 17(2):e1009303, February 2021.

\bibitem{yale_privacy_2019}
Andrew Yale, Saloni Dash, Ritik Dutta, Isabelle Guyon, Adrien Pavao, and Kristin~P. Bennett.
\newblock Privacy {Preserving} {Synthetic} {Health} {Data}.
\newblock April 2019.

\bibitem{arjovsky_wasserstein_2017}
Martin Arjovsky, Soumith Chintala, and Léon Bottou.
\newblock Wasserstein {Generative} {Adversarial} {Networks}.
\newblock In {\em Proceedings of the 34th International Conference on Machine Learning}, pages 214--223. PMLR, July 2017.
\newblock ISSN: 2640-3498.

\bibitem{yu_inclusive_2020}
Ning Yu, Ke~Li, Peng Zhou, Jitendra Malik, Larry Davis, and Mario Fritz.
\newblock Inclusive {GAN}: {Improving} {Data} and {Minority} {Coverage} in {Generative} {Models}.
\newblock In Andrea Vedaldi, Horst Bischof, Thomas Brox, and Jan-Michael Frahm, editors, {\em Computer {Vision} – {ECCV} 2020}, Lecture {Notes} in {Computer} {Science}, pages 377--393, Cham, 2020. Springer International Publishing.

\bibitem{biggan}
Andrew Brock, Jeff Donahue, and Karen Simonyan.
\newblock Large scale {GAN} training for high fidelity natural image synthesis.
\newblock {\em CoRR}, abs/1809.11096, 2018.

\bibitem{contragan}
Minguk Kang and Jaesik Park.
\newblock Contragan: Contrastive learning for conditional image generation, 2021.

\bibitem{dcgan}
Alec Radford, Luke Metz, and Soumith Chintala.
\newblock Unsupervised representation learning with deep convolutional generative adversarial networks, 2016.

\bibitem{goodfellow_generative_2020}
Ian Goodfellow, Jean Pouget-Abadie, Mehdi Mirza, Bing Xu, David Warde-Farley, Sherjil Ozair, Aaron Courville, and Yoshua Bengio.
\newblock Generative adversarial networks.
\newblock {\em Communications of the ACM}, 63(11):139--144, October 2020.

\bibitem{blackbox1}
Jinyuan Jia, Ahmed Salem, Michael Backes, Yang Zhang, and Neil~Zhenqiang Gong.
\newblock Memguard: Defending against black-box membership inference attacks via adversarial examples.
\newblock {\em CoRR}, abs/1909.10594, 2019.

\bibitem{sablay}
Alexandre Sablayrolles, Matthijs Douze, Yann Ollivier, Cordelia Schmid, and Hervé Jégou.
\newblock White-box vs black-box: Bayes optimal strategies for membership inference, 2019.

\bibitem{ye2021enhanced}
Jiayuan Ye, Aadyaa Maddi, Sasi~Kumar Murakonda, Vincent Bindschaedler, and Reza Shokri.
\newblock Enhanced membership inference attacks against machine learning models.
\newblock {\em arXiv preprint arXiv:2111.09679}, 2021.

\bibitem{liu2022}
Yiyong Liu, Zhengyu Zhao, Michael Backes, and Yang Zhang.
\newblock Membership inference attacks by exploiting loss trajectory, 2022.

\bibitem{neyman}
Jerzy Neyman and Egon~Sharpe Pearson.
\newblock Ix. on the problem of the most efficient tests of statistical hypotheses.
\newblock {\em Philosophical Transactions of the Royal Society of London. Series A, Containing Papers of a Mathematical or Physical Character}, 231(694-706):289--337, 1933.

\bibitem{bayesconst}
Yi~Lin.
\newblock A note on margin-based loss functions in classification.
\newblock {\em Statistics \& Probability Letters}, 68(1):73--82, 2004.

\bibitem{modecollapse}
Hoang Thanh{-}Tung, Truyen Tran, and Svetha Venkatesh.
\newblock On catastrophic forgetting and mode collapse in generative adversarial networks.
\newblock {\em CoRR}, abs/1807.04015, 2018.

\bibitem{homer_resolving_2008}
Nils Homer, Szabolcs Szelinger, Margot Redman, David Duggan, Waibhav Tembe, Jill Muehling, John~V. Pearson, Dietrich~A. Stephan, Stanley~F. Nelson, and David~W. Craig.
\newblock Resolving {Individuals} {Contributing} {Trace} {Amounts} of {DNA} to {Highly} {Complex} {Mixtures} {Using} {High}-{Density} {SNP} {Genotyping} {Microarrays}.
\newblock {\em PLoS Genetics}, 4(8):e1000167, August 2008.

\bibitem{robusthomer}
Cynthia Dwork, Adam Smith, Thomas Steinke, Jonathan Ullman, and Salil Vadhan.
\newblock Robust traceability from trace amounts.
\newblock In {\em 2015 IEEE 56th Annual Symposium on Foundations of Computer Science}, pages 650--669, 2015.

\bibitem{mailman2007ncbi}
Matthew~D Mailman, Michael Feolo, Yumi Jin, Masato Kimura, Kimberly Tryka, Rinat Bagoutdinov, Luning Hao, Anne Kiang, Justin Paschall, Lon Phan, et~al.
\newblock The ncbi dbgap database of genotypes and phenotypes.
\newblock {\em Nature genetics}, 39(10):1181--1186, 2007.

\bibitem{vcf}
Petr Danecek, Adam Auton, Goncalo Abecasis, Cornelis~A. Albers, Eric Banks, Mark~A. DePristo, Robert~E. Handsaker, Gerton Lunter, Gabor~T. Marth, Stephen~T. Sherry, Gilean McVean, Richard Durbin, and 1000 Genomes Project~Analysis Group.
\newblock {The variant call format and VCFtools}.
\newblock {\em Bioinformatics}, 27(15):2156--2158, 06 2011.

\bibitem{plink}
Shaun Purcell, Benjamin Neale, Kathe Todd-Brown, Lori Thomas, Manuel~AR Ferreira, David Bender, Julian Maller, Pamela Sklar, Paul~IW De~Bakker, Mark~J Daly, et~al.
\newblock Plink: a tool set for whole-genome association and population-based linkage analyses.
\newblock {\em The American journal of human genetics}, 81(3):559--575, 2007.

\bibitem{gulrajani_improved_2017}
Ishaan Gulrajani, Faruk Ahmed, Martin Arjovsky, Vincent Dumoulin, and Aaron~C Courville.
\newblock Improved {Training} of {Wasserstein} {GANs}.
\newblock In {\em Advances in {Neural} {Information} {Processing} {Systems}}, volume~30. Curran Associates, Inc., 2017.

\bibitem{cgan}
Takeru Miyato and Masanori Koyama.
\newblock cgans with projection discriminator.
\newblock {\em CoRR}, abs/1802.05637, 2018.

\bibitem{studio}
MinGuk Kang, Joonghyuk Shin, and Jaesik Park.
\newblock {StudioGAN: A Taxonomy and Benchmark of GANs for Image Synthesis}.
\newblock {\em IEEE Transactions on Pattern Analysis and Machine Intelligence (TPAMI)}, 2023.

\bibitem{imagenet}
Alex Krizhevsky, Ilya Sutskever, and Geoffrey~E Hinton.
\newblock Imagenet classification with deep convolutional neural networks.
\newblock {\em Communications of the ACM}, 60(6):84--90, 2017.

\bibitem{underatsningCNN}
Saad Albawi, Tareq~Abed Mohammed, and Saad Al-Zawi.
\newblock Understanding of a convolutional neural network.
\newblock In {\em 2017 international conference on engineering and technology (ICET)}, pages 1--6. Ieee, 2017.

\bibitem{inception}
Christian Szegedy, Wei Liu, Yangqing Jia, Pierre Sermanet, Scott Reed, Dragomir Anguelov, Dumitru Erhan, Vincent Vanhoucke, and Andrew Rabinovich.
\newblock Going deeper with convolutions (2014).
\newblock {\em arXiv preprint arXiv:1409.4842}, 10, 2014.

\bibitem{vae}
Diederik~P Kingma and Max Welling.
\newblock Auto-encoding variational bayes, 2022.

\bibitem{lpips}
Richard Zhang, Phillip Isola, Alexei~A. Efros, Eli Shechtman, and Oliver Wang.
\newblock The unreasonable effectiveness of deep features as a perceptual metric.
\newblock {\em CoRR}, abs/1801.03924, 2018.

\bibitem{metrics}
Stephanie Fu*, Netanel Tamir*, Shobhita Sundaram*, Lucy Chai, Richard Zhang, Tali Dekel, and Phillip Isola.
\newblock Dreamsim: Learning new dimensions of human visual similarity using synthetic data.
\newblock {\em arXiv:2306.09344}, 2023.

\bibitem{deepsim}
Alexey Dosovitskiy and Thomas Brox.
\newblock Generating images with perceptual similarity metrics based on deep networks, 2016.

\bibitem{netsim}
Leon~A. Gatys, Alexander~S. Ecker, and Matthias Bethge.
\newblock A neural algorithm of artistic style, 2015.

\bibitem{alexmetrics}
Justin Johnson, Alexandre Alahi, and Li~Fei-Fei.
\newblock Perceptual losses for real-time style transfer and super-resolution, 2016.

\bibitem{keras}
Fran\c{c}ois Chollet et~al.
\newblock Keras.
\newblock \url{https://keras.io}, 2015.

\bibitem{tensorflow2015-whitepaper}
Mart\'{i}n Abadi, Ashish Agarwal, Paul Barham, Eugene Brevdo, Zhifeng Chen, Craig Citro, Greg~S. Corrado, Andy Davis, Jeffrey Dean, Matthieu Devin, Sanjay Ghemawat, Ian Goodfellow, Andrew Harp, Geoffrey Irving, Michael Isard, Yangqing Jia, Rafal Jozefowicz, Lukasz Kaiser, Manjunath Kudlur, Josh Levenberg, Dandelion Man\'{e}, Rajat Monga, Sherry Moore, Derek Murray, Chris Olah, Mike Schuster, Jonathon Shlens, Benoit Steiner, Ilya Sutskever, Kunal Talwar, Paul Tucker, Vincent Vanhoucke, Vijay Vasudevan, Fernanda Vi\'{e}gas, Oriol Vinyals, Pete Warden, Martin Wattenberg, Martin Wicke, Yuan Yu, and Xiaoqiang Zheng.
\newblock {TensorFlow}: Large-scale machine learning on heterogeneous systems, 2015.
\newblock Software available from tensorflow.org.

\bibitem{pca}
Rasmus Bro and Age~K Smilde.
\newblock Principal component analysis.
\newblock {\em Analytical methods}, 6(9):2812--2831, 2014.

\bibitem{naturepca}
Markus Ringn{\'e}r.
\newblock What is principal component analysis?
\newblock {\em Nature biotechnology}, 26(3):303--304, 2008.

\bibitem{gausmix}
Douglas~A Reynolds et~al.
\newblock Gaussian mixture models.
\newblock {\em Encyclopedia of biometrics}, 741(659-663), 2009.

\bibitem{densityEstimation}
Laurent Dinh, Jascha Sohl{-}Dickstein, and Samy Bengio.
\newblock Density estimation using real {NVP}.
\newblock {\em CoRR}, abs/1605.08803, 2016.

\bibitem{platzer2013visualization}
Alexander Platzer.
\newblock Visualization of snps with t-sne.
\newblock {\em PloS one}, 8(2):e56883, 2013.

\end{thebibliography}

\clearpage
\appendix
\section{Lemmas and Proofs}
We include the lemmas and proofs from the main paper in this section.
\begin{lemma}
\label{lem:was}
$$
\text{TPR}_{\midist}(f) \leq \text{TPR}_{\mixgandist}(f) + \frac{\text{Lip}(f)}{2}W^{1}(\gandist, \mathcal{T}), 
$$
\end{lemma}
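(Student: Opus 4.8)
The plan is to unwind the definitions of $\text{TPR}_{\midist}(f)$ and $\text{TPR}_{\mixgandist}(f)$ so that their difference reduces to a difference of expectations of $f$ under $\mathcal{T}$ and $\gandist$, and then to bound that difference using the Kantorovich--Rubinstein dual characterization of the $1$-Wasserstein distance. First I would recall, following the bookkeeping set up just before the lemma, that points drawn from $\datadistro$ carry label $0$ while points from $\mathcal{T}$ (resp.\ $\gandist$) carry label $1$, and that both $\midist = \tfrac12\mathcal{T} + \tfrac12\datadistro$ and $\mixgandist = \tfrac12\gandist + \tfrac12\datadistro$ place weight $\tfrac12$ on the label-$1$ component. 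With the same (unnormalized) convention used there for the FPR, this gives $\text{TPR}_{\midist}(f) = \tfrac12\mathbb{E}_{x\sim\mathcal{T}}[f(x)]$ and $\text{TPR}_{\mixgandist}(f) = \tfrac12\mathbb{E}_{x\sim\gandist}[f(x)]$, so that
$$\text{TPR}_{\midist}(f) - \text{TPR}_{\mixgandist}(f) = \tfrac12\bigl(\mathbb{E}_{x\sim\mathcal{T}}[f(x)] - \mathbb{E}_{x\sim\gandist}[f(x)]\bigr).$$

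Next, if $\text{Lip}(f) = 0$ then $f$ is constant and both TPRs coincide, so the inequality is trivial; otherwise $f/\text{Lip}(f)$ is $1$-Lipschitz, and Kantorovich--Rubinstein duality yields
$$\mathbb{E}_{x\sim\mathcal{T}}\!\Bigl[\tfrac{f(x)}{\text{Lip}(f)}\Bigr] - \mathbb{E}_{x\sim\gandist}\!\Bigl[\tfrac{f(x)}{\text{Lip}(f)}\Bigr] \;\le\; \sup_{\text{Lip}(h)\le 1}\bigl(\mathbb{E}_{\mathcal{T}}[h] - \mathbb{E}_{\gandist}[h]\bigr) \;=\; W^1(\gandist,\mathcal{T}),$$
using symmetry of $W^1$. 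Multiplying by $\text{Lip}(f)$ and substituting into the identity for the TPR gap gives $\text{TPR}_{\midist}(f) - \text{TPR}_{\mixgandist}(f) \le \tfrac{\text{Lip}(f)}{2}W^1(\gandist,\mathcal{T})$, which is the claim.

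There is no genuine obstacle here: the two points that merely need care are (i) making the FPR/TPR normalization explicit so the factor $\tfrac12$ is correctly carried through — this is the same accounting that makes $f$ have matching FPR on $\midist$ and $\mixgandist$, as observed before the lemma — and (ii) checking that Kantorovich--Rubinstein duality applies, which it does since $\mathcal{X}$ is a separable metric space and $\mathcal{T}, \gandist$ are supported on the bounded, finite-dimensional data domains we consider, hence have finite first moment. The whole content of the lemma is thus the $1$-Lipschitz test-function characterization of $W^1$ applied to the classifier $f$ itself.
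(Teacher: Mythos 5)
Your proof is correct and follows essentially the same route as the paper's: unwind $\text{TPR}_{\midist}(f)=\tfrac12\mathbb{E}_{\mathcal{T}}[f]$ and $\text{TPR}_{\mixgandist}(f)=\tfrac12\mathbb{E}_{\gandist}[f]$, then bound the difference of expectations by $\text{Lip}(f)\cdot W^1(\gandist,\mathcal{T})$ via Kantorovich--Rubinstein. In fact your write-up is the cleaner one: the paper's proof has typos in the last two steps (the dummy variable inside the $\sup$ should be $g$, not $f$, and the coefficient should be $\tfrac{\text{Lip}(f)}{2}$ rather than $\tfrac{1}{2\text{Lip}(f)}$), whereas you carry $\text{Lip}(f)$ through correctly, handle the degenerate case $\text{Lip}(f)=0$, and note why the duality applies.
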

\begin{proof}
$\text{TPR}_{\midist}(f) = \frac{1}{2} \mathbb{E}_{\mathcal{T}}[f(x)] = \frac{1}{2}(\mathbb{E}_{\gandist}[f(x)] + \mathbb{E}_{\mathcal{T}}[f(x)] - \mathbb{E}_{\gandist}[f(x)]) = \text{TPR}_{\mixgandist}(f) + \frac{1}{2}(\mathbb{E}_{\mathcal{T}}[f(x)] - \mathbb{E}_{\gandist}[f(x)]) \leq \text{TPR}_{\mixgandist}(f) + \frac{1}{2}|\mathbb{E}_{\mathcal{T}}[f(x)] - \mathbb{E}_{\gandist}[f(x)])| \leq \text{TPR}_{\mixgandist}(f) + \frac{1}{2\text{Lip}(f)}\sup_{g: \text{Lip}(g) = 1}|\mathbb{E}_{\mathcal{T}}[f(x)] - \mathbb{E}_{\gandist}[f(x)])| = \text{TPR}_{\mixgandist}(f) + \frac{1}{2\text{Lip}(f)}W^{1}(\gandist, \mathcal{T})$. 
\end{proof}

\section{Attacks}
We discuss the details of the different attack types in this section.

\subsection{Robust Homer Attack}
The intuition behind the attack lies in Line $7$ of Algorithm~\ref{alg:homer}. Recall  $X_G$ denotes synthetic samples from the GAN, $X_R$ denotes reference data sampled from $\refdistro$, and $\tau \sim \midist$ denotes a test point. Let $\mu_g = \frac{1}{n}\sum\limits_{i=0}^n  x_{gi} \quad  \text{where } \quad x_{g0},x_{g1},  \dots x_{gn} \in X_G \quad  \text{and} \quad \mu_g \in {[0,1]}^d $. Similarly, let $  \mu_r= \frac{1}{m}\sum\limits_{i=\mathbf{1}}^m  x_i \quad  \text{where } \quad x_1, x_2 \dots x_m \in X_R \quad  \text{and} \quad \mu_r \in {[0,1]}^d $. Then sample $x \sim \datadistro$, and compute:
\[  \langle  \tau - x ,  \mu_g - \mu_r  \rangle  =  \langle  \tau ,  \mu_g - \mu_r  \rangle  - \langle  x ,  \mu_g - \mu_r  \rangle = \underbrace{\big[\langle \tau, \mu_g \rangle- \langle x, \mu_g \rangle \big] }_{(1)}  +  \underbrace{\big[\langle x,\mu_r , \rangle -\langle \tau,\mu_r , \rangle \big]}_{(2)} \]
 Part $(1)$ above checks if  $\tau $ is more correlated with $\mu_g$ than a random sample $x$ from $\datadistro$, while $(2)$ checks if $x$ is more correlated with $\mu_r$ than $\tau$. Observe that this is similar to the intuition behind the distance-based attack, but here we compute  an average \textit {similarity measure} to $\mu_g$ and $\mu_r$, rather than a distance to the closest point. \sn{Can salil make sure this description is correct}
 
\begin{algorithm}[!htbp]
\caption{Robust Homer Attack}
\label{alg:homer}
\begin{algorithmic}[1]
\Require $( X_R \in \{0,1\}^d, \gan, x \sim \midist)$
\State $  \mu_r = \frac{1}{m}\sum\limits_{i=1}^m  x_i \quad  \text{where } \quad x_1, x_2 \dots x_m \in X
_R$
\State $\alpha \gets \frac{1}{\sqrt{m}} + \epsilon, \quad \eta \gets 2 \alpha $ \Comment{$\epsilon \geq 0$}
\State $ X_G \sim \gandist $ \Comment{$ X_G\in {\{0,1\}}^d $}

\State $ \mu_g = \frac{1}{n}\sum\limits_{i=0}^n  x_{gi} \quad  \text{where } \quad x_{g0},x_{g1},  \dots x_{gn}\in X_G \quad  \text{and} \quad \mu_g \in {[0,1]}^d$
\State Let ${\left \lfloor{\mu_g - \mu_r}\right \rceil}_\eta \in {[-\eta , +\eta]}^d  $ \Comment{entry-wise truncation of $ \mu_g-\mu_r \, $ to $[-\eta , \eta]$   }
\For{ $\tau \sim \midist$ } 

\State $\rho \gets \langle  \tau - x ,  {\left \lfloor{\mu_g - \mu_r}\right \rceil}_\eta \rangle$
\If{$\rho >  \kappa$} \Comment{$\kappa$ is an hyperparameter}
                \State $\tau $ is in training set for $\gan$
\Else
      \State $\tau$ is not in training set for 
      $\gan$
\EndIf
\EndFor
\end{algorithmic}
\end{algorithm}

\subsection{Distance Attacks on Image GAN}\label{distance_image_metrics}
The choice of  metric for the distance-based attack on image GANs requires some careful consideration. Though $\ell_2$ distance was suggested in ~\cite{chen_gan-leaks_2020}, ~\cite{metrics} notes that it can't capture joint image statistics since it uses point-wise differences to measure image similarity ~\cite{metrics}. More recently, with the widespread adoption of deep neural networks, learning-based metrics have enjoyed wide popularity and acceptance ~\cite{deepsim,netsim,alexmetrics}. 
 These metrics leverage pre-trained deep neural networks to extract features and use these features as the basis for metric computation ~\cite{metrics}. \texttt{LPIPS} ~\cite{lpips} (Learned Perceptual Image Patch Similarity) is the most common of such learning-based metrics and was also proposed in ~\cite{chen_gan-leaks_2020} for carrying out distance-based attacks. Our distance-based attacks are implemented using both $\ell_2$ distance and \texttt{LPIPS}.

\subsection{Detector Attack}
The target GAN configurations  for the genomic data setting are shown in Table ~\ref{tab:data-config}.
\begin{table}[H]
\centering
\setlength{\extrarowheight}{2.5pt} 
\caption{Target GAN configurations for genomic Data }
\begin{tabular}{ ||c|c|c|c|c|c|| } 
\hline
Data & SNPs Dim. & Train Data Size & Ref. Data Size & Test Size & GAN Variant \\ 
\hline \hline
\multirow{3}{*}{\centering 1000 Genome} & 805 & 3000 & 2008 & 1000 & vanilla \& WGAN-GP \\ \cline{2-6}
& 5000 & 3000 & 2008 & 1000 & vanilla \& WGAN-GP \\ \cline{2-6}
& 1000 & 3000 & 2008 & 1000 & vanilla \& WGAN-GP \\ [0.5ex] 
\hline \hline
\multirow{3}{*}{\centering dbGaP} & 805 & 6500 & 5508 & 1000 & vanilla \&  WGAN-GP \\ \cline{2-6}
& 5000 & 6500 & 5508 & 1000 & vanilla \&  WGAN-GP\\ \cline{2-6}
& 10000 & 6500 & 5508 & 1000 & vanilla \& WGAN-GP \\ \cline{2-6} \hline
\end{tabular}
\label{tab:data-config}
\end{table}

\paragraph{$\detector$ Prediction Score.} \label{par:score}
In order to verify our Detectors succeed at the task for which they were trained, classifying samples from $\mixgandist$, in  Table ~\ref{tab:data-distin2} we report the test accuracy for our Detectors.
\begin{table}[H]
\centering
\setlength{\extrarowheight}{2.5pt} 
\caption{Detector Test Accuracy}
\begin{tabular}{ ||c|c|c|c|c|c|| } 
\hline
Data & GAN Variant & SNPs Dim. & Test  Size &  Train Epochs & Mean Accuracy \\ 
\hline \hline
\multirow{3}{*}{\centering 1000  Genome} & vanilla  & 805 & 1000 & 159 & 0.974 \\ \cline{2-6}
& vanilla  & 5000 & 3000 & 240 & 0.980 \\ \cline{2-6}
& vanilla  & 1000 & 3000 & 360 & 0.999 \\ [0.5ex] 
\hline \hline
\multirow{3}{*}{\centering dbGaP} &  WGAN-GP & 805 & 1000 & 300 & 0.995 \\ \cline{2-6}
&  WGAN-GP & 5000 & 1000 & 600 & 0.998 \\ \cline{2-6}
&  WGAN-GP & 10000 & 6500 & 1500 & 0.930 \\ \cline{2-6} \hline
\end{tabular}
\label{tab:data-distin2}
\end{table}

\paragraph{Detector Architecture for Genomic Data Setting }\label{appendix:architecure}
Figure ~\ref{fig:side_by_side} shows the $\detector$ model architectures implemented in Keras \cite{keras} and Tensorflow \cite{tensorflow2015-whitepaper} for the MIAs on genomic GANs. The exact architecture differs depending on the dimension of the input data.

\begin{figure}[!hbtbp]
  \centering
  \subfloat[Architecture for 805 SNPs]{\includegraphics[width=0.3\textwidth]{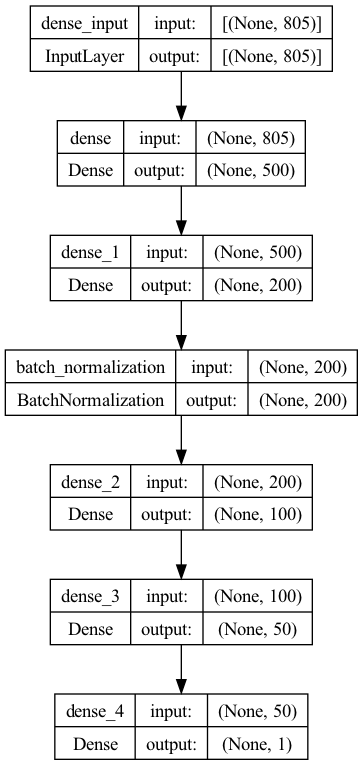}\label{fig:image1}}
  \hfill
  \subfloat[Architecture for 5000 SNPs]{\includegraphics[width=0.3\textwidth]{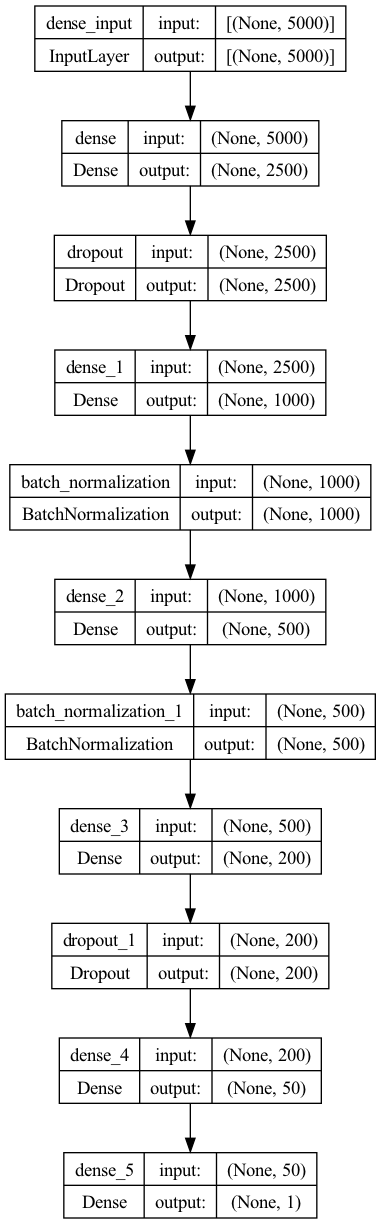}\label{fig:image2}}
  \hfill
  \subfloat[Architecture for 10000 SNPs]{\includegraphics[width=0.3\textwidth]{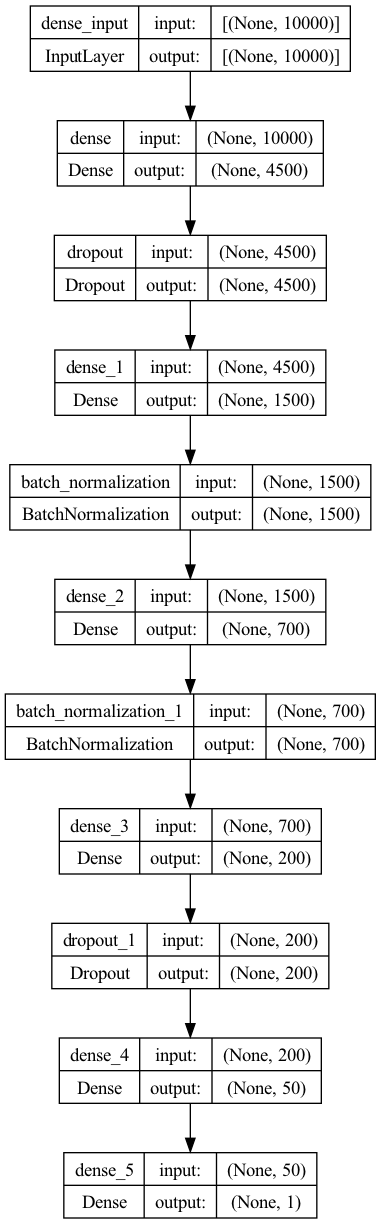}\label{fig:image3}}
  \caption{$\detector$ Architecture for MIA against  Vanilla GAN  trained on  1000  genome database for 805, 5000, and 10000 SNPS configurations. The $\detector$ architecture varies depending on the SNPs configurations.}
  \label{fig:side_by_side}
\end{figure}

\begin{figure}[!htbp]
  \centering
  \subfloat[Architecture for 805 SNPs]{\includegraphics[width=0.3\textwidth]{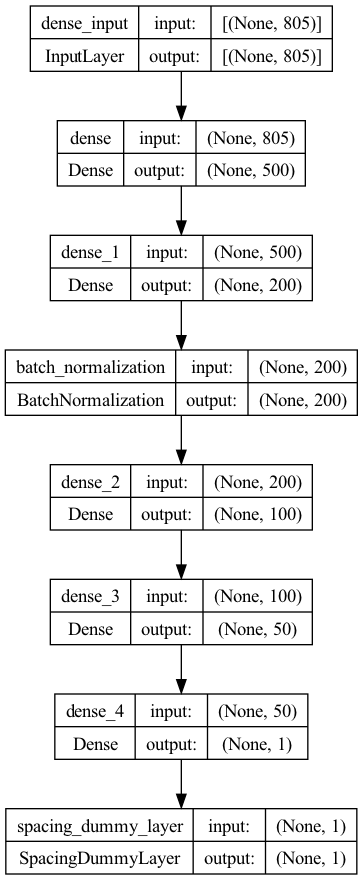}\label{fig:imagedb1}}
  \hfill
  \subfloat[Architecture for 5000 SNPs]{\includegraphics[width=0.3\textwidth]{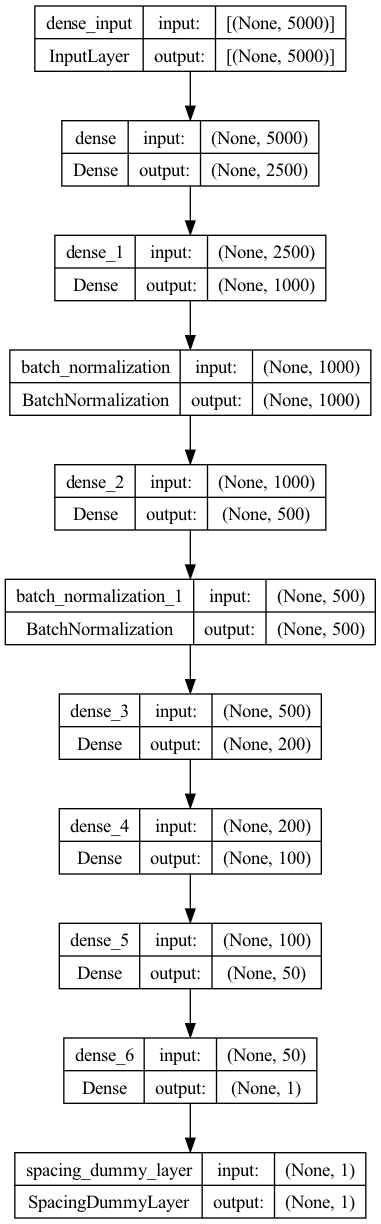}\label{fig:imagedb2}}
  \hfill
  \subfloat[Architecture for 10000 SNPs]{\includegraphics[width=0.3\textwidth]{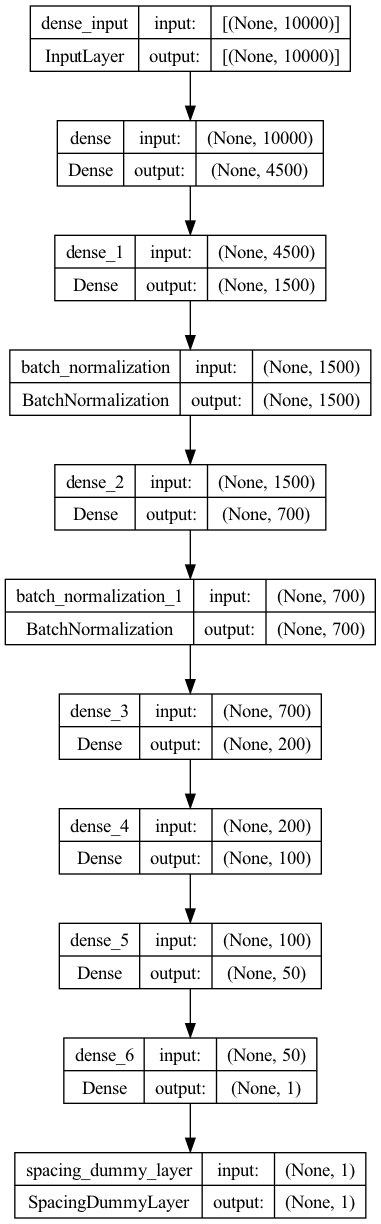}\label{fig:imagedb3}}
  \caption{$\detector$ Architecture for MIA against  Vanilla GAN  trained on  1000  genome database for 805,5000 and 10000 SNP dimension. The $\detector$ architecture varies depending on the SNPs configurations.}
  \label{fig:dbgapArch}
\end{figure}

\subsection{Augmented  Detector(ADIS)} \label{sec:adis}
ADIS builds on the $\detector$ by augmenting the feature space with variables derived from distance metrics and the DOMIAS attack. The pipeline is the same as that of training the Detector, with the following initial pre-processing step that creates the input to the network:
\begin{compactenum} 
    \item Sample $N(\approx 200)$ data points from $\datadistro, \gandist$ respectively. The sub-sampled data are for the computation of the reconstruction losses  (see Section~\ref{onedist}) and are not included again in the downstream training set.
    \item Apply dimensionality reduction separately to synthetic and reference samples, using PCA (details below).
    \item Given a test point $x \sim \mixgandist$, augment the feature space with test statistics computed using the projected and non-projected sample. We compute distance-based test statistics (one-way and two distances computed using the held-out sample),  and the DOMIAS likelihood ratio statistic $\frac{P_G(x)}{P_R(x)}$ ~\cite{refdata} where $P_G$ and $P_{X_R}$  are density models fitted on the synthetic and reference data respectively.
\end{compactenum}

\paragraph{ADIS Training Details.} \label{adis} The architecture for ADIS is the same as the $\detector$, but there are additional preprocessing steps and  fewer epochs ($\approx$ 10-30 epochs). A fixed sample size of $300$ from the reference and synthetic sample was set aside for the computation of reconstruction losses. This was followed by  fitting principal component analysis (PCA)  ~\cite{pca,naturepca} on the training data (consisting of reference and synthetic data) and selecting the first $100\text{-}300$ principal components - the actual component selected depends on the dimension of the original feature space, but typically $100$ components were selected for 805 SNPs and $300$ for $5000$ SNPs. Subsequently, we computed \textit{one-way and two-way} Hamming reconstruction losses on the non-transformed training data. Furthermore, as recommended in ~\cite{Mi-montecarlo}, we computed the \textit{two-way }  $\ell_2$ reconstruction loss on the PCA-transformed training data. Note that the $300$ samples we set aside had to be PCA-transformed before being used for the computation of \textit{two-way} $\ell_2$ reconstruction loss. For the computation of the test statistics of DOMIAS ~\cite{refdata}, we first reduced the dimensionality of the synthetic samples and reference data separately with the fitted PCA. Then we fit a Gaussian mixture model ~\cite{gausmix}, $P_G$, on the PCA-transformed synthetic samples and another  Gaussian mixture model, $P_{X_R}$, on the PCA-transformed reference data. Finally, we computed the test statistic $\frac{P_G(x)}{P_{X_R}(x)}$ for each PCA-transformed training point $x$.


\paragraph{Training  DOMIAS}\label{domias_explained}
Training DOMIAS involves 2 steps -- dimensionality reduction and density estimation. In the genomic data setting, we used PCA for dimension reduction following the same steps as described for ADIS.  For density estimation, we using the non-volume preserving transformation (NVP)\cite{densityEstimation}. We fit the the estimator to the synthetic and reference samples separately. For image data, we projected the image onto a low dimensional sub-space using the encoder of a VAE trained on the reference and synthetic data. Subsequently, we fit the density estimator using NVP on the transformed data.

\section{Convergence Plots And Memorization}
\label{sub:overfit}
We present PCA convergence plots for GANs trained on genomic data and empirical analysis of memorization in these GANs.
\subsection{Genomic GAN Convergence Plots}
\label{sec:gen_convergence}
For image data, visual inspection is a quick way to examine if the synthetic samples have converged to the underlying training samples. Clearly, with high-dimensional tabular data such quick visual convergence examination does not work. Recent work \cite{platzer2013visualization}  proposed using both the principal component analysis (PCA) and $t$-distributed stochastic neighbor embedding (T-SNE) plots for  the analysis of genomic data population structure. Consequently, we examine the convergence of the synthetic samples using both PCA and T-SNE. In particular, the synthetic samples converge if the population structure as captured by the PCA and T-SNE   is similar to that of the corresponding real genomic samples. 
Figures \ref{fig:p1} - \ref{fig:p6}  depict  the 6 principal components of both the training data and synthetic samples for each configuration  in Table \ref{tab:data-config}. Based on the plots it appears the synthetic samples are indeed converging towards the underlying data distribution.
\begin{figure}[htbp!]
\centering
\renewcommand\arraystretch{0.5}
\begin{tabular}{@{} c|c|c @{}}
  \subfloat[  805 SNPS, 1k Genome]{\includegraphics[width=.32\textwidth]{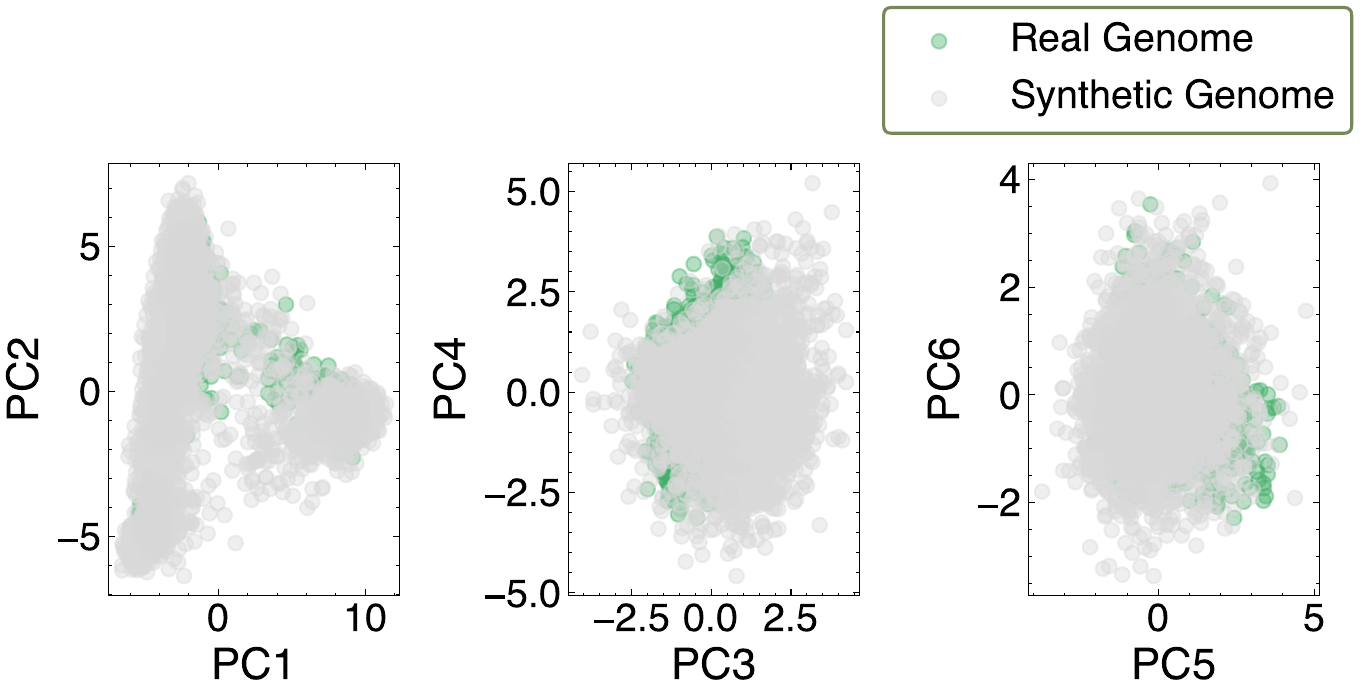}\label{fig:p1}}%
&
 \subfloat[  $5$k SNPS, 1k Genome]{\includegraphics[width=.32\textwidth]{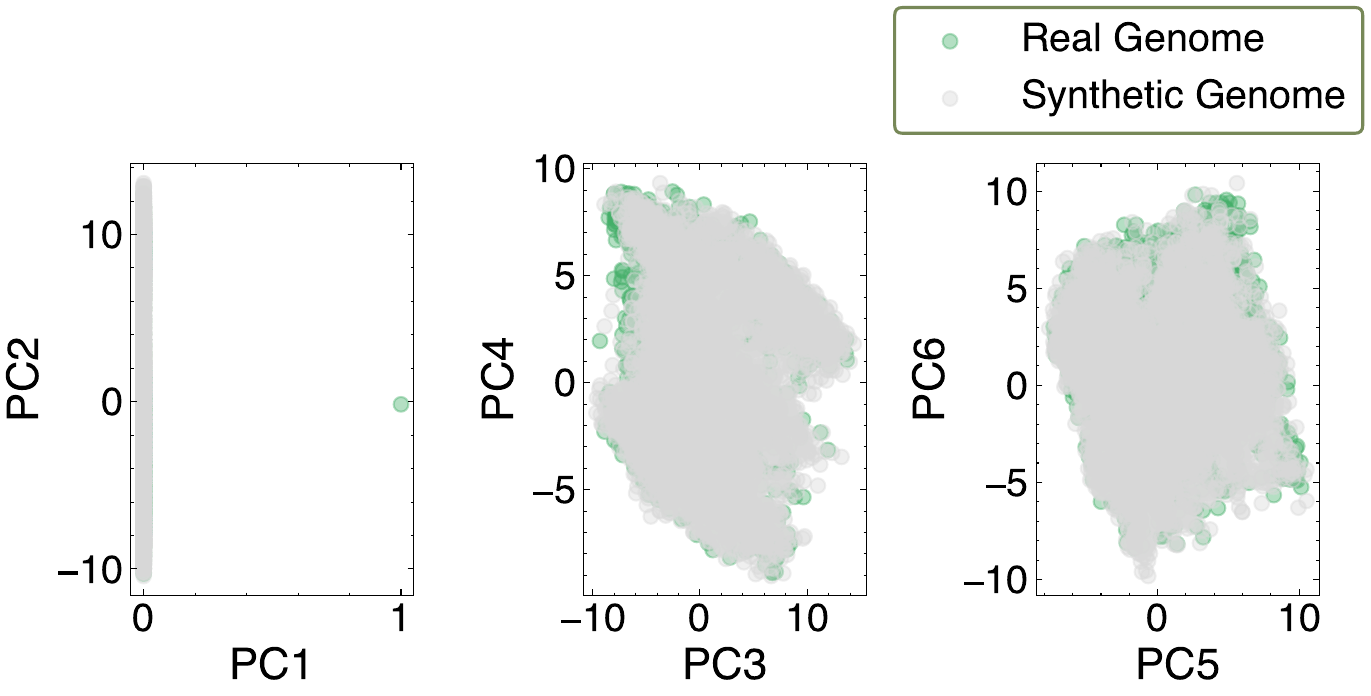}\label{fig:p2}}%
 &
 \subfloat[  10 SNPS, 1k Genome]{\includegraphics[width=.32\textwidth]{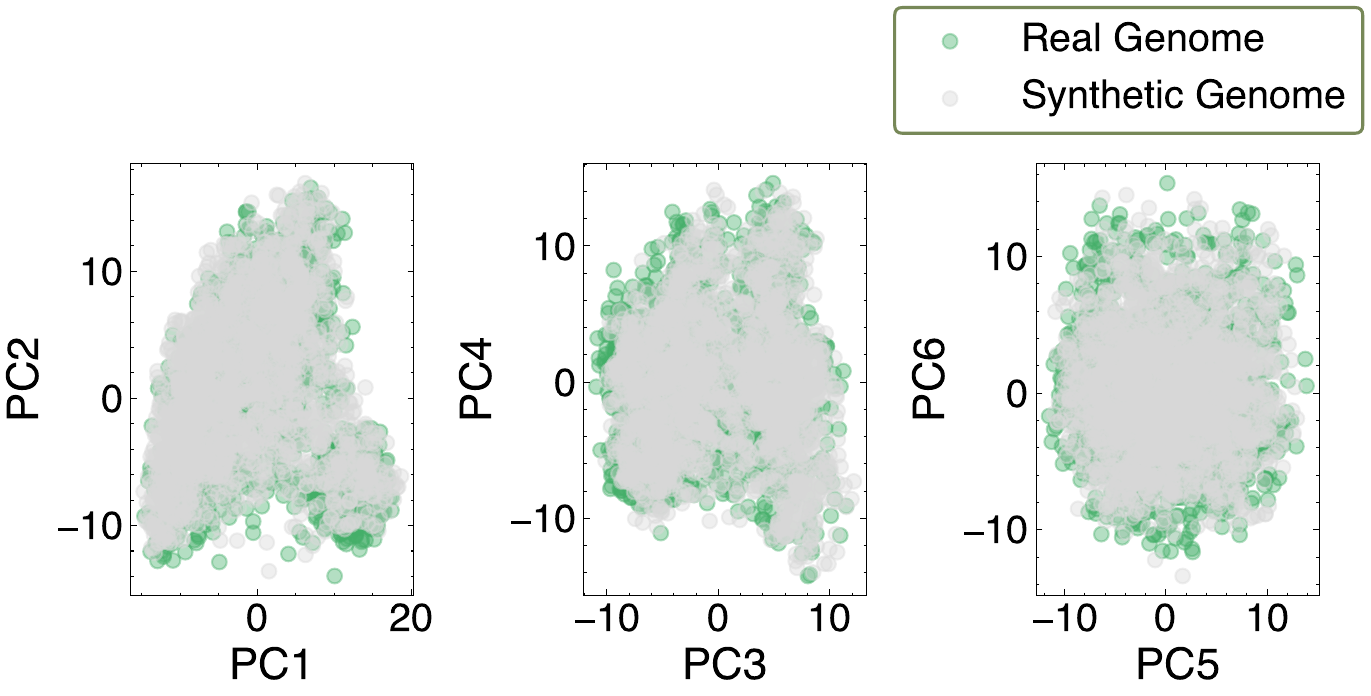}\label{fig:p3}}%
\\
& \\
\hline
& \\
 \subfloat[  805 SNPS, dbGaP]{\includegraphics[width=.32\textwidth]{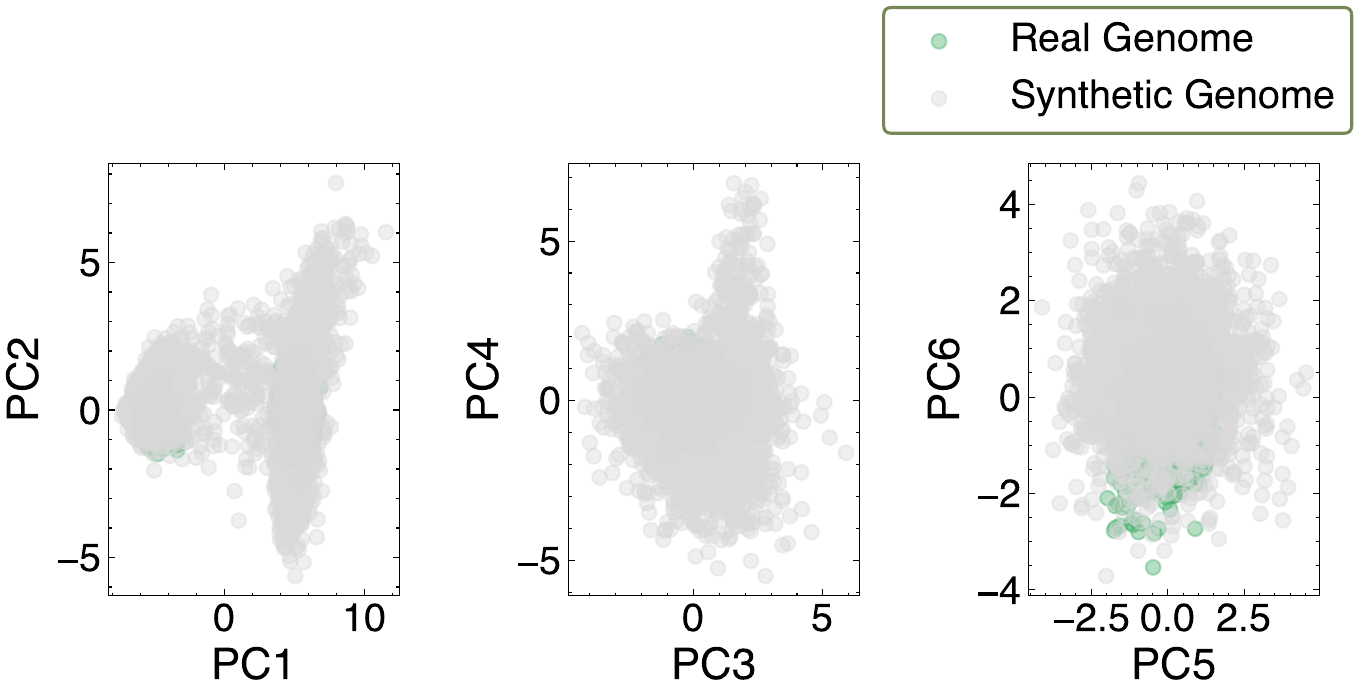}\label{fig:p4}}%
&
 \subfloat[  $5$k SNPS, dbGaP]{\includegraphics[width=.32\textwidth]{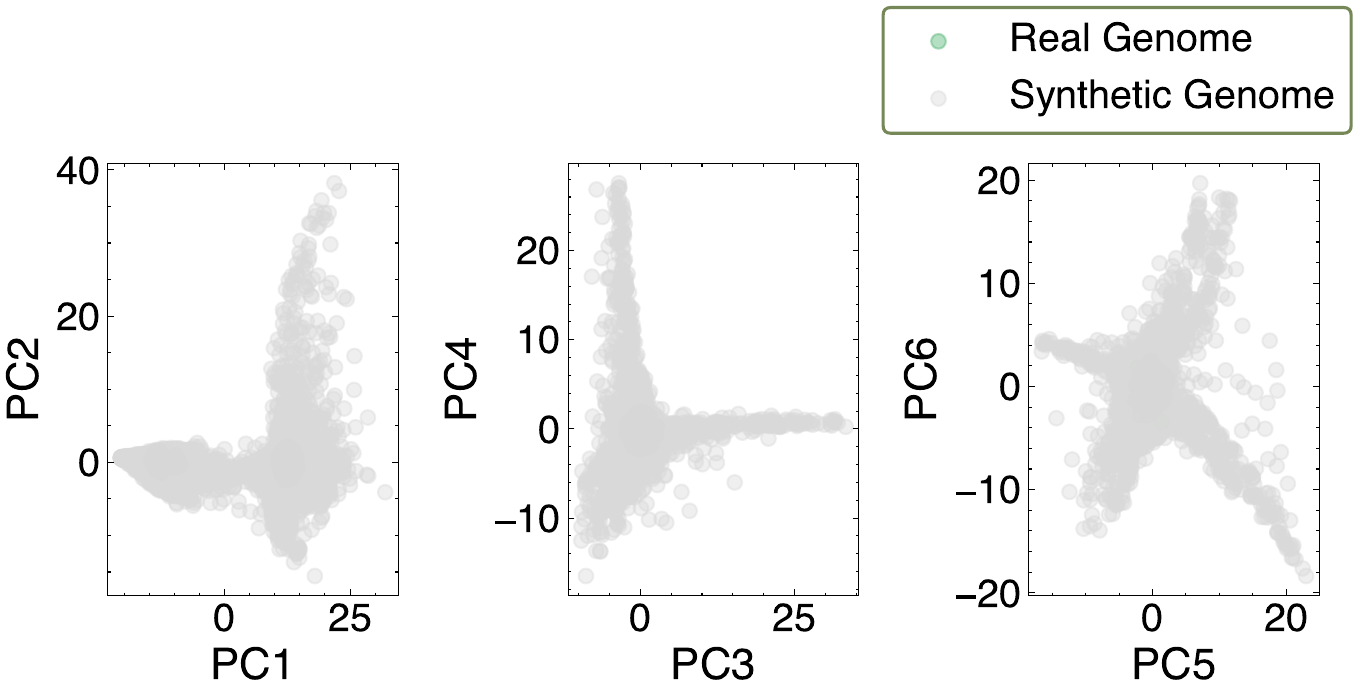}\label{fig:p5}}%
 &
 \subfloat[  $10$k SNPS, dbGaP]{\includegraphics[width=.32\textwidth]{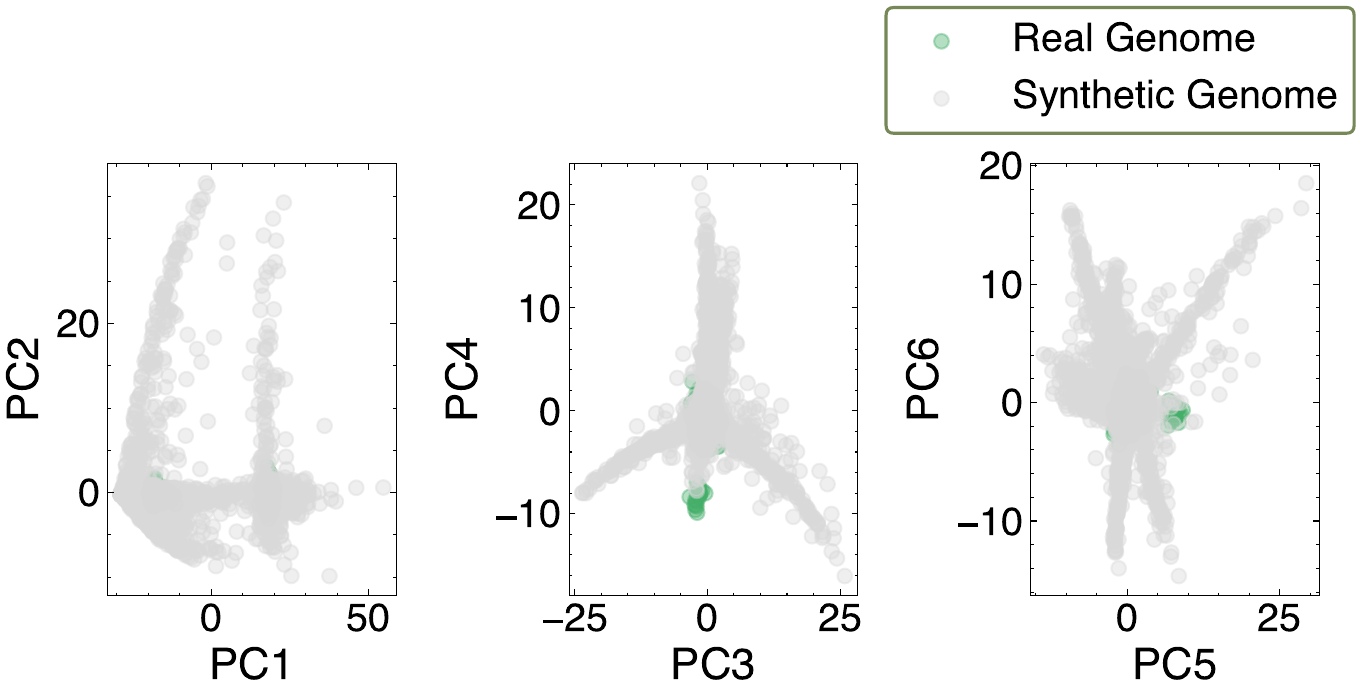}\label{fig:p6}}%
\end{tabular}
\caption{ PCA Plots for all data configurations. The plot could be read row-wise or column-wise. The first row depicts PCA plots for synthetic samples from Vanilla GAN trained on 1000 Genome data for $805$, $5$k, and $10$k SNPS respectively. The second row depicts PCA plots for synthetic samples from WGAN  trained on dbGaP data for $805$, $5$k, and $10$k SNPs respectively. Each column corresponds to 805,$5$k and $10$k SNPS configurations. }
\end{figure}
\begin{figure}[htbp!]
\centering
\begin{tabular}{@{} c|c @{}}
  \subfloat[$5$k SNPS, 1k Genome]{\includegraphics[width=.25\textwidth]{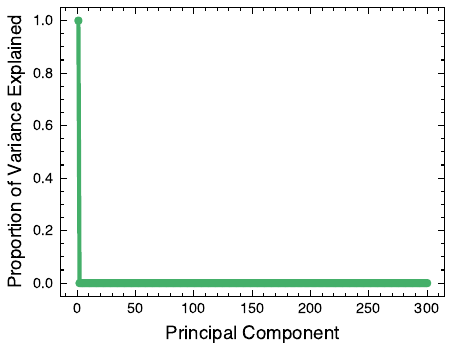}\label{fig:sc1}}%
\subfloat[$5$k SNPS, 1k Genome]{\includegraphics[width=.25\textwidth]{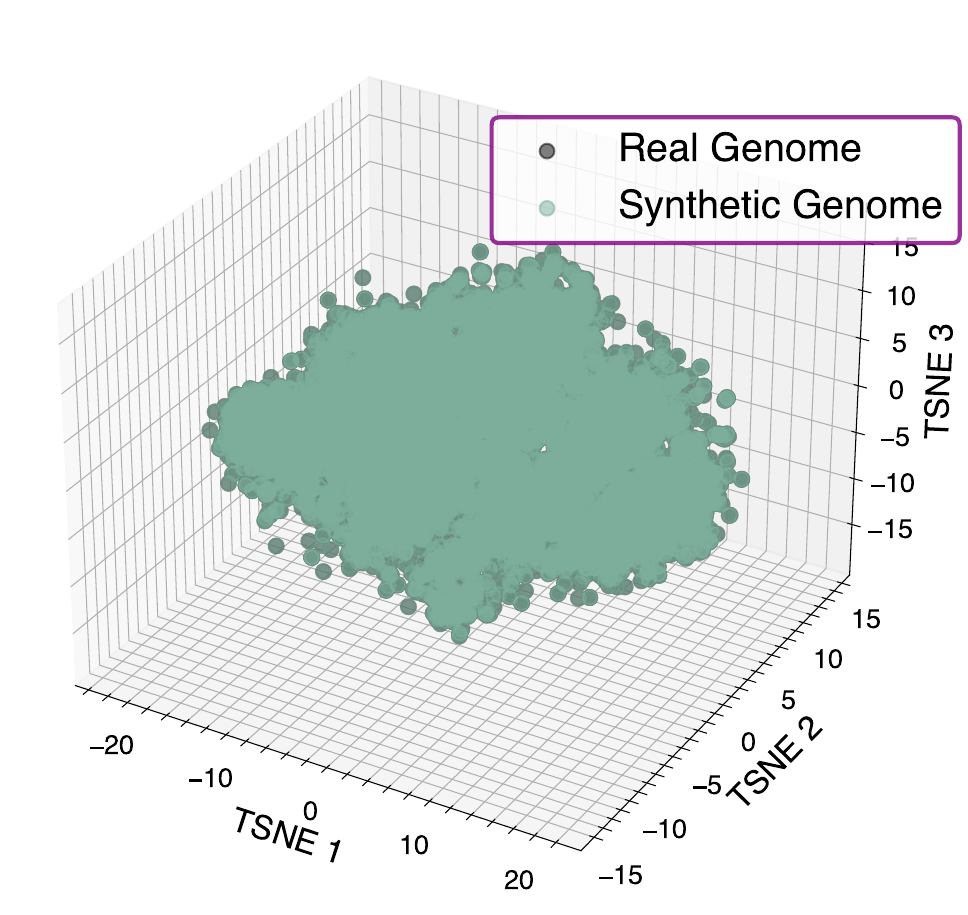}\label{fig:sc2}}%
&
 \subfloat[$10$k SNPS, 1k Genome]{\includegraphics[width=.25\textwidth]{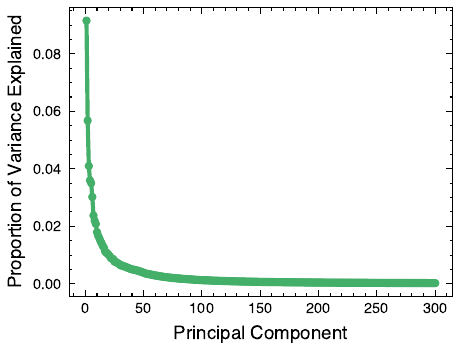}\label{fig:sc3}}%
\subfloat[$10$k SNPS, 1k Genome]{\includegraphics[width=.25\textwidth]{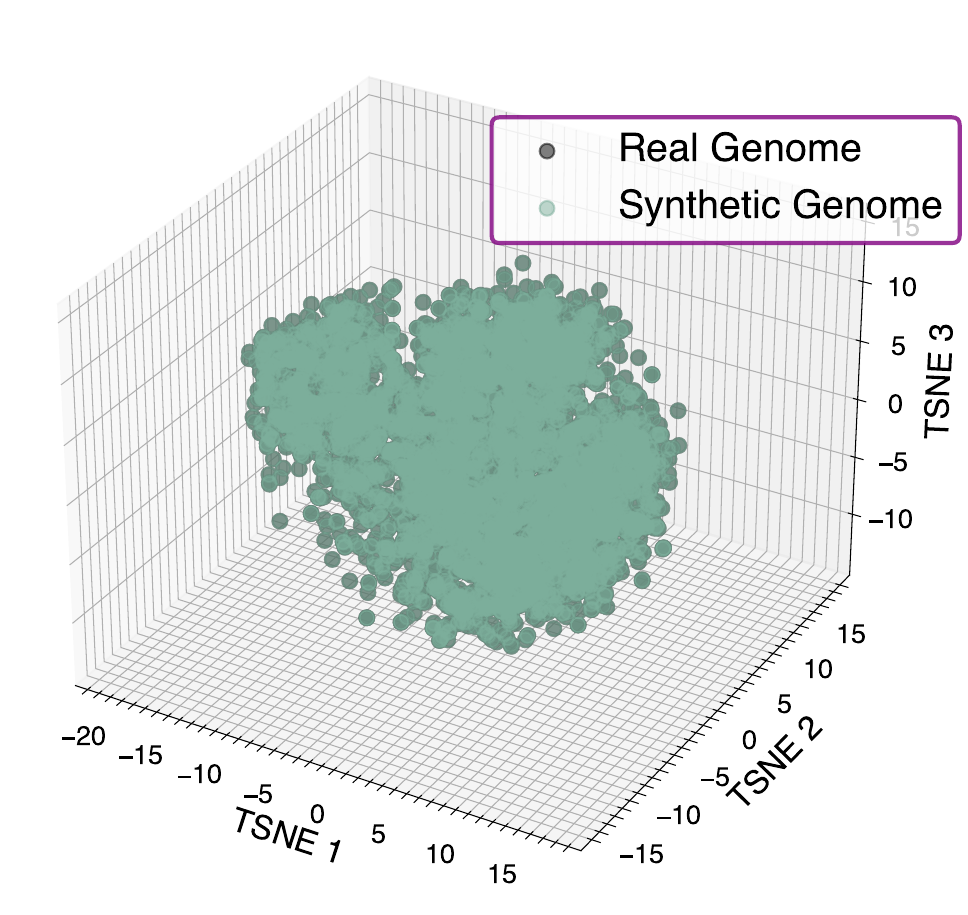}\label{fig:sc4}}%
\\
& \\
\hline
& \\
 \subfloat[$5$k SNPS, dbGaP]{\includegraphics[width=.25\textwidth]{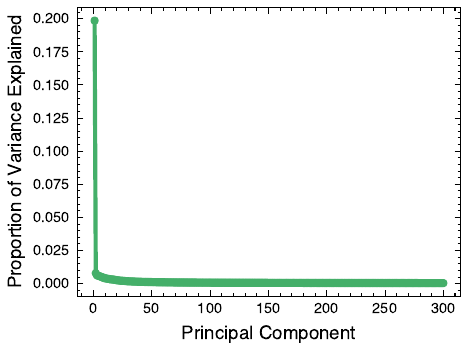}\label{fig:sc5}}%
 \subfloat[ $10$k SNPS,]{\includegraphics[width=.25\textwidth]{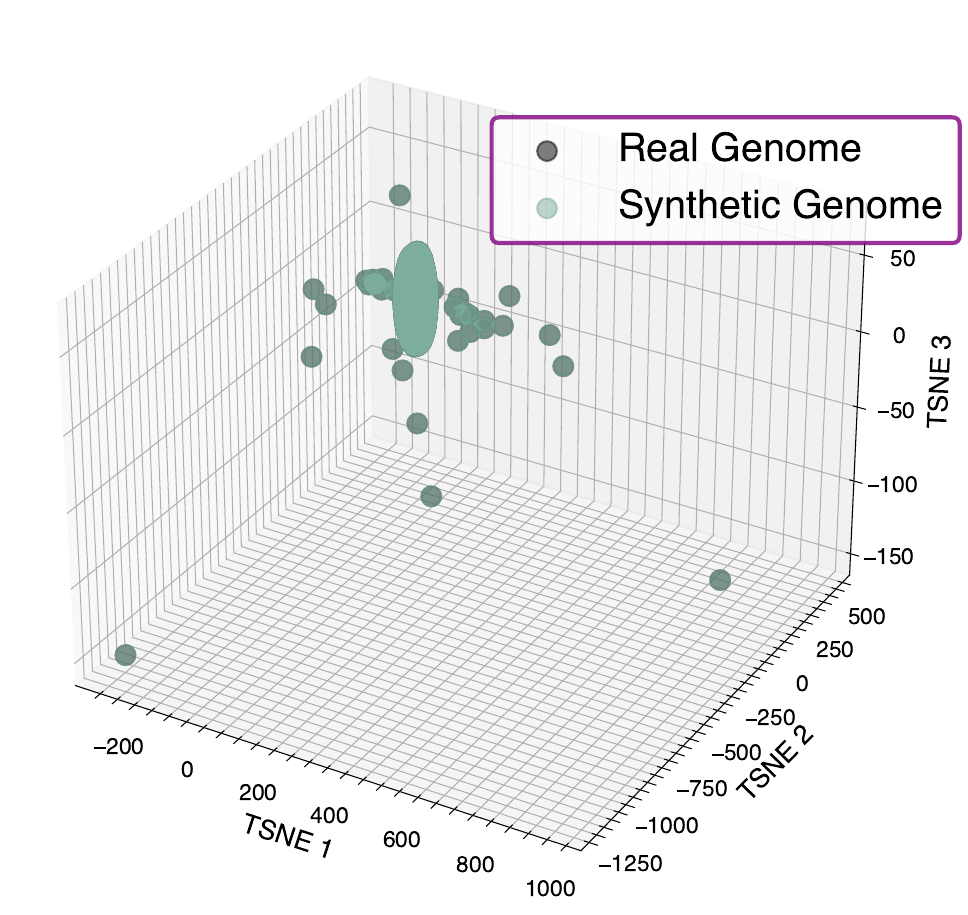}\label{fig:sc6}}%
&
 \subfloat[$10$k SNPS, dbGaP ]{\includegraphics[width=.25\textwidth]{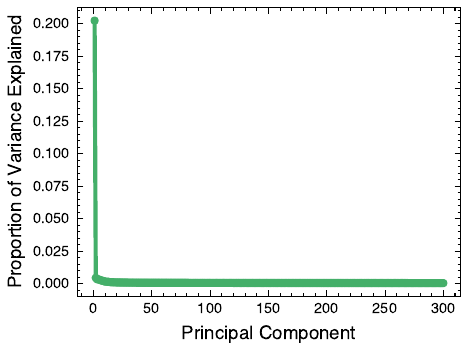}\label{fig:sc7}}%
 \subfloat[
$10$k SNPS, dbGaP]{\includegraphics[width=.25\textwidth]{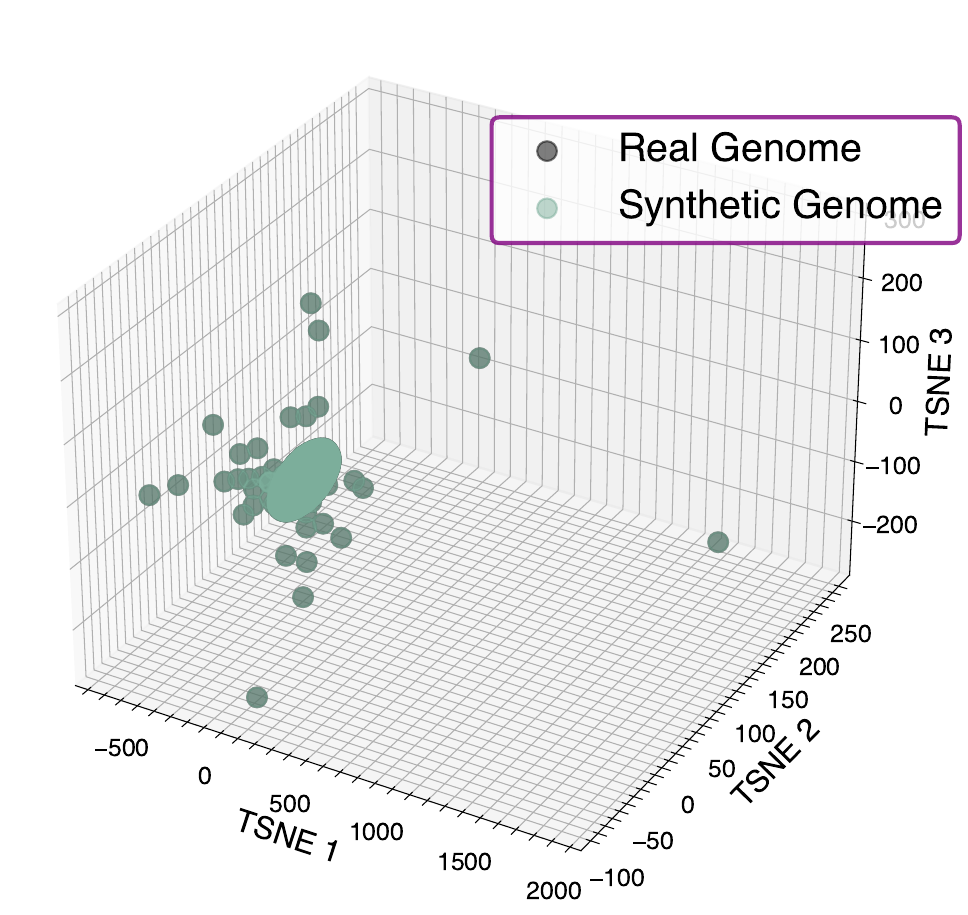}\label{fig:sc8}}%
\end{tabular}
\caption{The first row shows scree and TSNE  plots for synthetic samples from Vanilla GAN trained on 1000 Genome data for  $5$k, $10$k SNPS respectively. The second row shows scree and TSNE  plots for synthetic samples from WGAN trained on dbGaP  data for  $5$k, and $10$k SNPS respectively. The distance metric for the reconstruction loss is the Hamming distance. }
\end{figure}



\subsection{Over-fitting and Memorization} \label{wholeseq}
This section performs some preliminary analyses on the genomic data GANs to ensure they aren't memorizing the training set. Memorization is one indicator of overfitting, which would artificially inflate the attack success metrics of our MIAs. 
\paragraph{Whole sequence memorization.} We iteratively sample batches of  $3000$-$4000$ synthetic samples, $S_B$, from  the GAN until the sample  size is of order $\geq 10^6$. For each sample point in the $i$-th batch, $s_{b} \in  S_{B_i}$ , we compute its minimum Hamming distance to  $\datadistro$, the training data. Observe that a Hamming distance of zero would indicate whole sequence memorization - since it implies that an exact copy of a training data sample  is being synthesized by the GAN. Figures \ref{fig:memorize}, \ref{fig:memorize2} show the GAN models do not memorize the training data, since no reconstruction losses of zero are reported.

\sn{Whole sequence memorization results can go in the appendix, not that interesting}
\luk{this is done.}
\begin{figure}[htpb!]%
 \centering
 \subfloat[805 SNPS,1k Genome]{\includegraphics[width=.32\textwidth]{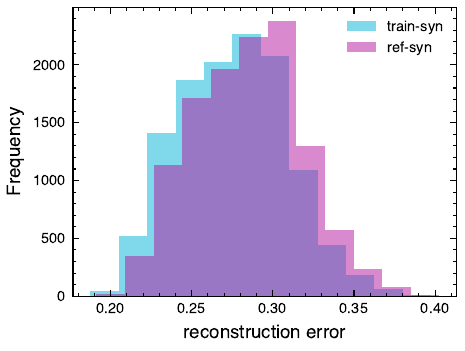}\label{fig:recon1}}%
 \subfloat[$5$k SNPS,1k Genome]{\includegraphics[width=.32\textwidth]{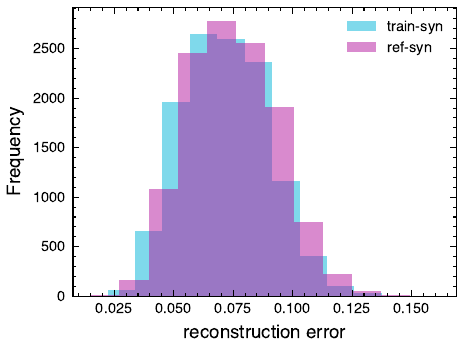}\label{fig:recon2}}
 \subfloat[$10$k SNPS,1k Genome]{\includegraphics[width=.32\textwidth]{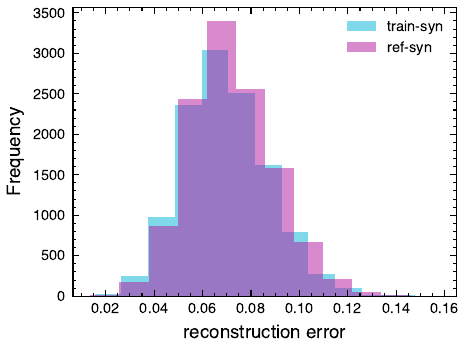}\label{fig:recon3}}%
 \caption{ The plots show the frequency distribution of reconstruction error for  whole sequence memorization test  for 3 batches of synthetic samples from vanilla GAN trained. The label \textbf{train-syn} (in seafoam green) indicates that we are measuring reconstruction loss for  each batch of the synthetic samples with respect to the training data.  For reference purposes, we also plot, \textbf{ref-syn}, which is the reconstruction loss of the synthetic samples  given the reference samples (in purple). The distance metric for the reconstruction loss is the Hamming distance.}%
 \label{fig:memorize}
\end{figure}
\begin{figure}[htpb!]%
 \centering
 \subfloat[805 SNPs, dbGaP]{\includegraphics[width=.32\textwidth]{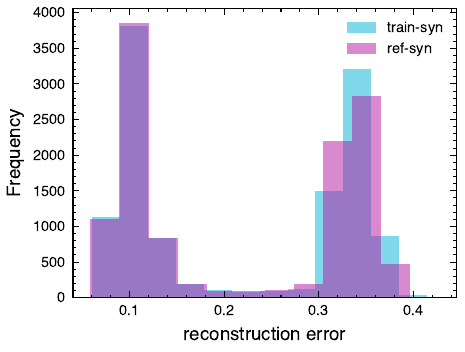}\label{fig:recon4}}%
 \subfloat[$5$k SNPs, dbGap]{\includegraphics[width=.32\textwidth]{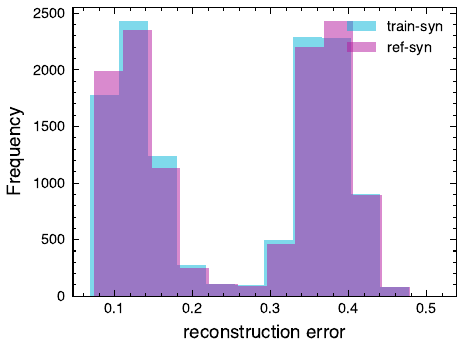 }\label{fig:recon5}}
 \subfloat[$10$k SNPs, dbGaP]{\includegraphics[width=.32\textwidth]{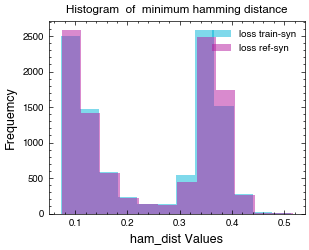}}
\label{memorize1} %
 \caption{The plots show the frequency distribution of reconstruction error for 3 batches of synthetic samples from WGAN-GP. The label \textbf{train-syn} (in seafoam green) indicates that we are measuring reconstruction loss for  each batch of the synthetic samples with respect to the training data .  For reference purposes, we  plot \textbf{ref-syn}, which is the reconstruction loss of the synthetic samples  given the reference samples (in purple). The distance metric for the reconstruction loss is the Hamming distance.  }%
 \label{fig:memorize2}%
\end{figure}




\sn{Do we need the below? Appendix? What does it add to the paper?}
\section{Results Table for Genomic GANs (TPR At Low FPR)}
Table \ref{tab:summary} depicts the TPR at FPR of $0.01$ , $0.1$, $0.005$ and $0.001$ respectively (see   Figures \ref{fig:roc_gen} and \ref{fig:rev1}) for genomic data setting.

\begin{table}[!htbp]
\centering
\setlength{\extrarowheight}{3pt} 
\caption{Table of Attack Results (Genomic Data)}
\begin{tabular}{ ||c|c|c|p{1.5cm}|p{1.5cm}|p{1.5cm}|p{1.5cm}|| } 
\hline
GAN Variant & Dataset & Attack Method & TPR @0.01 FPR & TPR @ 0.1 FPR & TPR @0.005 FPR  & TPR @0.001 FPR  \\ 
\hline \hline
vanilla 805 & 1000 genome & one-way distance  & 1.02\%  & 6.95\% & 0.51\% & 0.13\% \\ 
\hline
vanilla 805 & 1000 genome & two-way distance & 7.79\%  & 28.86\% & 5.14\% & 1.89\%  \\ 
\hline
vanilla 805 & 1000 Genome & weighted distance & 4.75\% & 21.76
\% & 2.85\% & 1.76\% \\ 
\hline
vanilla 805 & 1000 Genome & Robust Homer & 2.46\% & 17.89\% & 1.26\% & 0.20\% \\ \hline 
vanilla 805 & 1000 Genome & $\detector$ & 1.89\% & 13.72\% & 0.97\% & 0.19\% \\ \hline
vanilla 805 & 1000 Genome & ADIS & 5.31\% & 26.89\% & 3.14\% & 1.16\% \\ 
\hline 
vanilla $5$k & 1000 genome & one-way distance  & 0.88\% & 7.6\% & 0.37\% & 0.10\% \\ 
\hline
vanilla $5$k & 1000 genome & two-way distance & 1.36\% & 13.89\% & 0.68\% & 0.25\% \\ 
\hline
vanilla $5$k & 1000 Genome & weighted distance & 1.00\% & 12.6\% & 0
.56\% & 0.15\% \\ 
\hline
vanilla $5$k & 1000 Genome & Robust Homer & 0.88\% & 7.62\% & 0.58\% & 0.25\% \\ \hline
vanilla $5$k & 1000 Genome & $\detector$ & 1.52\% & 12.70\% & 0.82\% & 0.33\% \\  \hline
vanilla $5$k & 1000 Genome &ADIS & 2.00\% & 13.40\% & 0.99\% & 0.26\% \\ 
\hline 
vanilla $10$k & 1000 genome & one-way distance  & 1.10\% & 9.97\% & 0.62\% & 0.23\% \\ 
\hline
vanilla $10$k & 1000 genome & two-way distance & 1.07\% & 10.52\% & 0.42\% & 0.17\% \\ 
\hline
vanilla $10$k & 1000 Genome & weighted distance & 0.90\% & 10.4\% & 0.49\% & 0.15\% \\ 
\hline
vanilla $10$k & 1000 Genome & Robust Homer & 1.11\% & 10.56\% & 0.58\% & 0.36\% \\ \hline
vanilla $10$k & 1000 Genome & $\detector$ & 1.38\% & 11.56\% & 0.68\% & 0.17\% \\ \hline
vanilla $10$k & 1000 Genome & ADIS & 1.72\% & 12.56\% & 1.03\% & 0.25\% \\ 
\hline \hline

wgan-gp 805 & dbGaP & one-way distance  & 0.91\% & 7.69\% & 0.54\% & 0.23\% \\ 
\hline
wgan-gp 805 & dbGaP & two-way distance & 1.52\% & 11.91\% & 0.63\% & 0.23\% \\ 
\hline
wgan-gp 805 & dbGaP & weighted distance & 1.41\% & 11.75\% & 0.91\% & 0.38\% \\ 
\hline
wgan-gp 805 & dbGaP & Robust Homer & 1.37\% & 11.22\% & 0.67\% & 0.25\% \\ 
\hline
wgan-gp 805 & dbGaP & $\detector$ & 2.59\% & 16.01\% & 1.14\% & 0.22\% \\ 
\hline 
wgan-gp 805 & dbGaP & ADIS & 2.40\% & 15.15\% & 1.04\% & 0.35\% \\ 
\hline 
wgan-gp $5$k & dbGaP & one-way distance & 0.93\% & 9.17\% & 0.23\% & 0.16\% \\ \hline
wgan-gp $5$k & dbGaP & two-way distance & 1.62\% & 12.81\% & 0.77\% & 0.29\% \\ 
\hline
wgan-gp $5$k & dbGaP & weighted distance & 1.90\% & 12.90\% & 1.90\% & 0.35\% \\ 
\hline
wgan-gp $5$k & dbGaP & Robust Homer & 2.13\% & 14.03\% & 1.48\% & 0.62\% \\ 
\hline
wgan-gp $5$k & dbGaP & $\detector$ & 2.95\% & 20.10\% & 1.64\% & 0.37\% \\ 
\hline 
wgan-gp $5$k & dbGaP & ADIS & 6.40\% & 28.70\% & 3.98\% & 1.32\% \\ 
\hline 
wgan-gp $10$k & dbGaP & one-way distance  & 1.42\% & 9.38\% & 0.55\% & 0.09\% \\ 
\hline
wgan-gp $10$k & dbGaP & two-way distance & 2.07\% & 12.45\% & 1.20\% & 0.47\% \\ 
\hline
wgan-gp $10$k & dbGaP & weighted distance & 1.96\% & 11.80\% & 0.98\% & 0.45\% \\ 
\hline
wgan-gp $10$k & dbGaP & Robust Homer & 2.71\% & 13.95\% & 1.35\% & 0.65\% \\ 
\hline
wgan-gp $10$k & dbGaP & $\detector$ & 3.26\% & 24.23\% & 2.11\% & 0.67\% \\ \hline
wgan-gp $10$k & dbGaP & ADIS & 6.07\% & 24.49\% & 4.24\% & 1.6\% \\ 
\hline
\end{tabular}
\label{tab:summary}
\end{table}

\section{Results Table for Image GANs (TPR At Low FPR)}
Table \ref{tab:summary2} depicts the TPR at FPR of $0.01$ , $0.1$, $0.005$ and $0.001$ respectively (see   Figures \ref{fig:roc_gen} and \ref{fig:rev1}) for genomic data setting.

\begin{table}[!htbp]
\centering
\setlength{\extrarowheight}{3pt} 
\caption{Table of Attack Results (Image Data)}
\begin{tabular}{ ||c|c|c|p{1.5cm}|p{1.5cm}|p{1.5cm}|p{1.5cm}|| } 
\hline
GAN Variant & Dataset & Attack Method & TPR @0.01 FPR & TPR @0.1 FPR & TPR @0.005 FPR & TPR @0.001 FPR \\ 
\hline \hline
BigGAN & CIFAR10 & two-way distance ($\ell_2$)  & 1.0\% & 10.8\% & 0.6\% & 0.6\%\\ 
\hline
BigGAN & CIFAR10  & one-way distance ($\ell_2$) & 0.6\% & 8.7\% & 0.2\% & 0.0\% \\ 
\hline
BigGAN & CIFAR10  & two-way distance (\texttt{LPIPS}) & 0.4\% & 11.0\% & 0.1\% & 0.1\% \\ 
\hline
BigGAN & CIFAR10  & one-way distance (\texttt{LPIPS}) & 1.2\% & 10.8\% & 0.7\% &  0.1\% \\ 
\hline
BigGAN &  CIFAR10 & $\detector$ (Conv. net) & 0.8\% & 7.7\% & 0.3\% & 0.0\% \\ 
\hline
BigGAN & CIFAR10 & $\detector$ (syn-syn) &  1.4\% & 11.8\% & 0.7\% & 0.2\% \\ 
\hline
BigGAN & CIFAR10 & $\detector$(Incept.-MLP) & 2.2\% & 9.3\% & 1.6\% & 0.2\% \\ 
\hline
DCGAN & CIFAR10 & two-way distance ($\ell_2$)  & 1.0\% & 10.8\% & 0.6\% & 0.6\% \\ 
\hline
DCGAN & CIFAR10  & one-way distance ($\ell_2$) & 1.0\% & 11.1\% & 0.6\% & 0.6\% \\ 
\hline
DCGAN & CIFAR10  & two-way distance (\texttt{LPIPS}) & 1.2\% & 13.0\% & 0.7\% & 0.3\% \\ 
\hline
DCGAN & CIFAR10  & one-way distance (\texttt{LPIPS}) & 1.7\% & 9.0\% & 0.6\% & 0.2\% \\ 
\hline
DCGAN &  CIFAR10 & $\detector$ (Conv. net) & 1.4\% & 9.7\% & 1.1\% & 0.3\% \\ 
\hline
DCGAN & CIFAR10 & $\detector$ (syn-syn) & 1.2\% & 9.6\% & 0.6\% & 0.1\% \\ 
\hline
DCGAN & CIFAR10 & $\detector$ (Incept.-MLP) & 0.8\% & 11.4\% & 0.4\% & 0.2\% \\ 
\hline
ProjGAN & CIFAR10 & two-way distance ($\ell_2$)  & 1.0\% & 10.7\% & 0.6\% & 0.6\% \\ 
\hline
ProjGAN & CIFAR10  & one-way distance ($\ell_2$) & 0.6\% & 8.4\% & 0.3\% & 0.1\% \\ 
\hline
ProjGAN & CIFAR10  & two-way distance (\texttt{LPIPS}) & 0.7\% & 10.7\% & 0.6\% & 0.1\% \\ 
\hline
ProjGAN & CIFAR10  & one-way distance (\texttt{LPIPS}) & 1.0\% & 10.9\% & 0.8\% & 0.6\% \\ 
\hline
ProjGAN &  CIFAR10 & $\detector$ (Conv. net) & 1.5\% & 9.9\% & 0.7\%  & 0.1\%  \\ 
\hline
ProjGAN & CIFAR10 & $\detector$ (syn-syn) & 2.0\% & 11.8\% & 1.4\% & 0.1\% \\ 
\hline
ProjGAN & CIFAR10 & $\detector$ (Incept.-MLP) & 0.9\% & 11.6\% & 0.2\% & 0.0\% \\ 
\hline
ContraGAN & CIFAR10 & two-way distance ($\ell_2$)  & 1.0\% & 10.8\% & 0.6\% & 0.6\% \\ 
\hline
ContraGAN & CIFAR10  & one-way distance ($\ell_2$) & 0.3\% & 8.7\% & 0.3\% & 0.0\% \\ 
\hline
ContraGAN & CIFAR10  & two-way distance (\texttt{LPIPS}) & 1.7\% & 12.3\% & 1.0\% & 0.3\% \\ 
\hline
ContraGAN & CIFAR10  & one-way distance (\texttt{LPIPS}) & 1.3\% & 10.4\% & 0.4\% & 0.3\% \\ 
\hline
ContraGAN &  CIFAR10 & $\detector$ (Conv. net) & 1.2\% & 9.6\% & 0.7\% & 0.1\% \\ 
\hline
ContraGAN & CIFAR10 & $\detector$ (syn-syn) & 0.9\% & 10.0\% & 0.5\% & 0.01\% \\ 
\hline
ContraGAN & CIFAR10 & $\detector$ (Incept.-MLP) & 0.9\% & 10.4\% & 0.3\% & 0.2\% \\ 
\hline
\end{tabular}
\label{tab:summary2}
\end{table}

\end{document}